\title{Hierarchical Reinforcement Learning with AI Planning Models}
\author{
    Junkyu Lee, 
    Michael Katz, 
    Don Joven Agravante, \\
    Miao Liu, 
    Geraud Nangue Tasse,
    Tim Klinger, 
    Shirin Sohrabi\\
}
\algrenewcommand\alglinenumber[1]{\scriptsize #1:}
\def\defemb#1#2{\expandafter\def\csname #1\endcsname
                              {\relax\ifmmode #2\else\hbox{$#2$}\fi}}
\newtheorem{theorem}{Theorem}
\newtheorem{definition}{Definition}
\newcommand{\sasplus}{\textsc{sas}^+}
\newcommand{\var}{v}
\newcommand{\vars}{\ensuremath{\mathcal V}}
\newcommand{\ops}{\ensuremath{\mathcal O}}
\newcommand{\transprob}{P}
\newcommand{\transreward}{r}
\newcommand{\options}{\cO}
\newcommand{\option}{{\ensuremath{O}}}
\newcommand{\goaloption}{\option_{\ast}}
\newcommand{\action}{a}
\newcommand{\op}{o}
\def\actions{\mathcal{A}}
\newcommand{\state}{s}
\def\states{\mathcal{S}}
\newcommand{\initstate}{s_0'}
\newcommand{\goal}{s_{\ast}}
\newcommand{\Goal}{G}
\newcommand{\domain}{\ensuremath{\mathit{dom}}}
\newcommand{\fact}[1]{\ensuremath{\langle \var, #1 \rangle}}
\newcommand{\val}{\vartheta}
\newcommand{\prv}{\ensuremath{\textit{prv}}}
\newcommand{\pre}{\ensuremath{\textit{pre}}}
\newcommand{\eff}{\ensuremath{\mathit{eff}}}
\newcommand{\leftq}{\llbracket}
\newcommand{\rightq}{\rrbracket}
\newcommand{\applied}[1]{\leftq #1 \rightq}
\newcommand{\plan}{\pi}
\newcommand{\ptask}{\Pi}
\newcommand{\mdptask}{{\ensuremath{\mathcal M}}}
\newcommand{\initset}[1]{{\ensuremath{\mathcal I}_{#1}}}
\newcommand{\termset}[1]{\beta_{#1}}
\newcommand{\tuple}[1]{\langle #1 \rangle}
\newenvironment{proof}{\noindent {\bf Proof:}}%
 {\hfill \rule[0.3ex]{1ex}{1ex} \par \addvspace{\bigskipamount}}
 \newcommand{\statesmap}{L}
\newcommand{\cC}{\mathcal{C}}
\newcommand{\cD}{\mathcal{D}}
\newcommand{\cF}{\mathcal{F}}
\begin{document}

\maketitle

\begin{abstract}
Two common approaches to sequential decision-making are AI planning (AIP) and reinforcement learning (RL). Each has strengths and weaknesses. AIP is interpretable, easy to integrate with symbolic knowledge, and often efficient, but requires an up-front logical domain specification and is sensitive to noise; RL only requires specification of rewards and is robust to noise but is sample inefficient and not easily supplied with external knowledge. We propose an integrative approach that combines high-level planning with RL, retaining interpretability, transfer, and efficiency, while allowing for robust learning of the lower-level plan actions.

Our approach defines options in hierarchical reinforcement learning (HRL) from AIP operators by establishing a correspondence between the state transition model of AI planning problem and the abstract state transition system of a Markov Decision Process (MDP). Options are learned by adding intrinsic rewards to encourage consistency between the MDP and AIP transition models.
We demonstrate the benefit of our integrated approach by comparing the performance of RL and HRL algorithms in both MiniGrid and N-rooms environments, showing the advantage of our method over the existing ones.
\end{abstract}

\section{Introduction}
Sequential decision-making problems have been historically tackled with two distinct and largely complementary research paradigms: AI planning (AIP) and reinforcement learning (RL). 
In AIP, a human modeler creates a domain specification in logic which specifies action (operator) preconditions and effects. This approach can yield computationally efficient planning and interpretable plans, 
but AIP planners are not tolerant to noise or uncertainty,
and specification of the domain can be difficult when it is not well-understood or complex. 
By contrast, model-free deep reinforcement learning approaches lift the burden of model specification, inherit the tolerance to noise and uncertainty of neural networks, and require no special human understanding of the domain. But RL does not generate easily interpretable policies and domain-specific knowledge, which can be hugely important in sample efficient learning, must be specified obliquely through the design of training algorithms, policy networks and reward structures.

HRL \cite{barto2003recent} 
aims to improve the sample efficiency of non-hierarchical RL methods for solving large scale problems by exploiting domain knowledge that allows decomposing task structures in the abstract state and action space \cite{dean1995decomposition}.
Notable earlier works include 
hierarchical abstract machines \cite{parr1998reinforcement} 
that encode domain knowledge about high-level state transition into a finite state machine,
options framework \cite{sutton-et-al-aij1999} 
that characterizes HRL as executing sub-routines, each generating temporarily extended actions in semi-MDP (SMDP),
and feudal RL \cite{dayan1992feudal} 
that presents hierarchical control architecture,
where the lower-level agents achieve subgoals directed by the higher-level control agent. 
Then, later works focus on discovering temporally extended actions 
such as options \cite{bacon-et-al-aaai2017,machado-bellemare-bowling-icml2017,bagaria2019option},
learning state abstractions
\cite{RavindranApproximateH,li2006towards}
or
skills or sub-tasks \cite{mcgovern2001automatic, stolle-precup-isara2002, castro2011automatic, simsek2008skill}. More recently, state abstraction has also been used in option learning\cite{cradol} 
to further improve the sample efficiency of HRL. 
In addition to infusing knowledge for decomposing the task, 
HRL agents could inform the lower-level control agents 
with intrinsic rewards \cite{singh2005intrinsically} 
to better guide optimization procedure \cite{vezhnevets2017feudal,kulkarni2016hierarchical,nachum2018data}. 

Recently, 
we see increasing interest in 
integrating symbolic methods in AIP and deep RL due to their complementary nature.
Reward machines (RM) \cite{icarte2018using} specify the reward of MDP 
over a finite-state machine (FSM) 
such that the agent can learn policies that follow the symbolic event models,
encoded manually, or
translated from linear temporal logic (LTL) expressions specifying possible symbolic policies \cite{camacho2019ltl}. 
Then, those FSMs augmented with symbolic knowledge can also be utilized to define temporarily extended actions for HRL agents.
\cite{icarte2022reward,araki2021logical,den2022reinforcement}.
High-level instructions for guiding RL 
can also be provided 
through a sequence of symbolic trajectories
in various forms
such as state predicates and action operators in AIP \cite{illanes-et-al-icaps2020},
or other formal action languages \citet{yang-et-al-ijcai2018,lyu-et-al-aaai2019,KokelMNRT21,kokel2021deep}.

When the problem involves complex task structures, e.g.,
as in combined task and motion planning 
\cite{eppe2019semantics,garrett2021integrated}
or in the RL environments originated from AIP domains
\cite{toyer-et-al-aaai2018,groshev2018learning,shen-et-al-icaps2020},
the integrated approach is a more natural choice,
and we often have access to domain knowledge 
that captures the task structure for defining AIP models.
In this paper, 
we present an integrated AI planning and RL framework for HRL,
which we call Planning annotated RL (PaRL).
In PaRL, we provide an AIP model annotating the RL environment, 
which offers abstract and partial knowledge about the RL MDP. 
Unlike other integrated methods, 
the annotating AIP model is a valid planning task that the planning agent can supply to AI planners.
The main contributions of the paper are summarized as follows:
(1) 
Unlike other approaches that require 
a manual process or rely on the solution to the problem, 
we present a method for deriving options directly from AIP model. 
(2) We design a method for generating intrinsic rewards for RL agents 
that encourages consistency between the annotating task and MDP transitions by introducing consistency constraints at the planning level.
(3) We show  the improvement in sample efficiency due to decomposition and 
additional benefits inherit from AIP and RL approaches in 
MiniGrid and N-rooms environments \cite{gym_minigrid,babyai_iclr19}.

\section{Background}

\subsection{RL and Options Framework}
We assume that an agent interacts with 
a goal-oriented MDP 
$\mdptask =\langle \states, \actions, \transprob, \transreward, s_0, \Goal, \gamma\rangle$
with 
states $\states$, 
actions $\actions$, 
a state transition function $\transprob: \states \times  \actions \times \states \rightarrow [0, 1]$, 
a reward function $\transreward:\states \times \actions \rightarrow \mathbb{R}$,
an initial state $s_0\in\states$,
a set of goal states $\Goal\subset\states$,
and a discounting factor $\gamma \in (0, 1)$ for the rewards.
In this goal-oriented environment, 
we are interested in the sparse reward task,
and
the objective is to learn a stationary optimal policy $\pi^{*}$ 
that maximizes the expected return,
$\pi^* = \arg\max_{\pi} \mathbb{E}_{\pi} [ \sum_{t=0}^{\infty} \gamma^t r_t | s_0],$
where 
$s_0$ is the initial state, and
$\pi(a|s)$ is a stochastic policy $\pi: \states \times \actions \rightarrow [0, 1]$.

A value function $V^{\pi}(s)$
is the expected sum of the discounted reward in each state $s \in \states$,
$$V^{\pi}(s)\!\!=\!\!\sum_{a \in \actions} \pi(a|s) 
[ \transreward(s,a) + \gamma \sum_{s' \in \states} \transprob(s' | s,a) V^{\pi}(s')].
$$
The action-value function
gives
the value of executing an action $a \in \actions$ in state $s \in \states$ under the policy $\pi$,
$$Q^{\pi}(s,a) = \transreward(s,a) + \gamma \sum_{s' \in \states} \transprob(s'|s,a) Q^{\pi}(s').$$
The optimal value function $V^*(s)$ and action-value function $Q^*(s,a)$
can be found by 
$V^*(s)=\max_{\pi} V^{\pi}(s)$
and
$Q^*(s, a)=\max_{\pi} Q^{\pi}(s, a).$

In options framework \cite{sutton-barto-1998},
A set of options $\options$ formalizes
the temporally extended actions
that defines a semi-MDP (SMDP) over the original MDP $\mdptask$.
A Markovian option $\option \in \options$ is 
a triple $\langle\initset{\option}, \pi_{\option}, \termset{\option}\rangle$,
where
$\initset{\option}$ is the initiation set in which $\option$ can begin,
$\pi_{\option}$ is a stationary option policy 
$\pi_{\option}: \states \times \actions \rightarrow [0, 1]$,
and
$\termset{\option}$ is a termination set in which $\option$ terminates.
We follow the call-and-return option execution model, 
where an agent selects an option $\option$ using 
an option level policy  $\mu(\option|\state)$ in state $s$ at time $t$,
and generates a sequence of actions according to the option policy  $\pi_o(a|s)$. 
The execution of an option $\option$ continues up to $k$ steps until reaching the $\beta_{\option}$ and 
it returns the option reward $R(s, \option)$
accumulated from $t+1$ to $t+k$ with a discounting factor $\gamma$,
$$
R(s, \option) = \mathbb{E}\big[\sum_{t'=t+1}^{t+k} \gamma^{t'-t-1} r_{t'}|\mathcal{E}(\option, s, t)\big],
$$
where $\mathcal{E}(\option, s, t)$ denotes the event of 
an option $\option$ being selected in state $s$ at time $t$,
and $r_{t'}$ denotes the reward obtained at time $t'$.
The state transition probability 
from a state $s$ to a state $s'$ under the execution of an option $\option$ can be written as
$$
\transprob(s'|s,\option)=\sum_{j=0}^{\infty} \gamma^{j} Pr\big(k=j, s_{t+j} | \mathcal{E}(\option, s, t)\big).
$$

In SMDP,
the value function 
$V^{\mu}(s)$ under the option level policy $\mu$ 
can be written as
$$
V^{\mu}(s)\!=\!\sum_{\option \in \options}
\mu(\option|s) \Big[R(s,\option)\!+\!\sum_{s' \in \states}
P(s'|s,\option)V^{\mu}(s') \Big],
$$
and 
the option-value function $Q^{\mu}(s, O)$ is
$$
Q^{\mu}(s,\option)\!=\!R(s,\option)\!+\!\sum_{s' \in \states}\!\!
P(s'|s,\option) \sum_{\option \in \options}\!\!\mu(\option|s) Q^{\mu}(s', \option).
$$
In general, learning options ranges 
from the offline option discovery to the online end-to-end option critic approach \cite{bacon-et-al-aaai2017},
and 
each option policy $\pi_{\cO}$ could be trained by 
existing RL algorithms such as value-based methods or policy-gradient methods.
Given a set of learned options $\options$, 
an off-policy learning methods such as Q-learning \cite{watkins-dayan-ml1992,mnih2015human}
can learn the option value function 
by SMDP Q-learning \cite{sutton-et-al-aij1999}.  

\subsection{AI Planning} 
To formally represent planning tasks, we follow the notation of
$\sasplus$ planning tasks \cite{backstrom-nebel-compint1995}.
In $\sasplus$, 
a planning task
$\ptask$ is given by a tuple 
$\langle\vars,\ops,\initstate, \goal\rangle$, 
where 
$\vars$ is a finite set of state variables,
and $\ops$ is a finite set of operators.
Each state variable $\var \in
\vars$ has a finite domain $\domain(\var)$ of values. A pair $\fact{\val}$ with
$\var \in \vars$ and $\val \in \domain(\var)$ is called a fact. 
A (partial) assignment to $\vars$ is called a (partial) state, 
with 
the full state $\initstate$ being the initial state
and 
the partial state $\goal$ being the goal. 
We denote the variables of a partial assignment $p$ by $\vars(p)$. 
It is convenient to view a partial state $p$ as a
set of facts with $\langle \var, \val \rangle \in p$ if and only if $p[\var] =\val$. 
A partial state $p$ is consistent with state $\state$ if $p\subseteq s$. 

We denote the set of states of $\ptask$ by $\states'$. 
Each operator $\op\!\in\!\ops$ is a pair 
$\langle \pre(\op),\eff(\op)\rangle$ of partial states called 
preconditions and effects. 
The (possibly empty) subset
of preconditions that do not involve variables from the effect is
called prevail condition, 
$\prv(\op)\!=\!\{\langle\!\var,\val\!\rangle\!\mid\!\langle \var, \val \rangle\!\in\!\pre(\op), \var \not\in\vars(\eff(\op)) \}$.
An operator $\op$ is applicable in a state $\state\in\states'$ if and only if
$\pre(\op)$ is consistent with $\state$ ($\pre(\op)\subseteq\state$). 
Applying $\op$ changes the
value of $\var$ to $\eff(\op)[\var]$, if defined. 
The resulting state is
denoted by $\state\applied{\op}$. 
An operator sequence 
$\plan =\langle\op_{1},\dots,\op_{k}\rangle$ is applicable in $\state$ 
if there exist states $\state_0, \cdots, \state_k$ such that 
(1) $\state_0 = \state$, and
(2) for each $1 \leq i \leq k$, $\pre(\op_i)\subset\state_{i\mbox{-}1}$ and
$\state_{i} = \state_{i\mbox{-}1}\applied{\op_i}$. 
We denote the state $\state_k$ by $\state\applied{\plan}$. 
$\plan$ is a plan for $s$ iff $\plan$ is applicable in
$s$ and $\goal\subseteq s\applied{\plan}$. 

A transition graph of a planning task $\ptask=\langle\vars,\ops,\initstate,\goal\rangle$ is
a triple $\cT_{\ptask} = \tuple{\states, T_{\ptask}, \goal}$, 
where 
$\states$ are the states of $\ptask$, 
$T_{\ptask} \subseteq \states\times\ops\times\states$ is 
a set of labeled transitions, 
and $\goal \subseteq \states$ is the set of goal states.
An abstraction of the transition graph $\cT$ is a pair $\tuple{\cT',\alpha}$,
where $\cT' = \tuple{\states', T', \states_{\ast}'}$ is an abstract transition graph 
and $\alpha: \states\mapsto\states'$ is an abstraction mapping,
such that $\tuple{\alpha(s),\op,\alpha(s')}\in T'$ for all $\tuple{s,\op,s'}\in
T$, and $\alpha(s)\in \states_{\ast}'$ for all $s\in \states_{\ast}$.

\section{Annotating RL with Planning}
In this section,
we formulate our HRL framework.
The basic idea is to link 
the AI planning task and the 
MDP task 
by viewing 
the former as an abstraction of the latter
and mapping 
all transitions associated with a planning operator
to a temporal abstraction encapsulated in the RL option \cite{sutton-et-al-aij1999}.

\subsection{PaRL Task}
We start by defining a Planning annotated RL (PaRL) task 
and present the options framework derived from a symbolic planning task.

\begin{definition}
A \textbf{PaRL task} is a triple 
$E:=\tuple{\mdptask, \ptask, \statesmap}$, 
where 
$\mdptask:=\langle
\states, \actions, \transprob, \transreward, s_0, \Goal, \gamma\rangle
$ is a goal-oriented MDP over states $\states$, 
$\ptask:=\langle 
\vars, \ops, \initstate, \goal
\rangle$ is a planning task over 
states $\states'$, and
$\statesmap: \states \mapsto \states'$ is a surjective mapping 
from the MDP states $\states$ to planning task states $\states'$
satisfying 
$\initstate=\statesmap(s_0)$ and 
$\goal$ consistent with $\statesmap(\state)$ for all $\state\!\!\in\!\!\Goal$.
We denote the pre-image of $\state'\!\!\in\!\!\states'$ under $\statesmap$,
$\{s\!\!\in\!\!\states\!\!\mid\!\!\statesmap(\state)\!\!=\!\!\state'\}$
by $\statesmap^{-1}(\state')$.
\end{definition}

The generic definition of PaRL task is a mixed blessing. 
On the one hand, it does not pose any constraints on the
connection between $\mdptask$ and $\ptask$ 
beyond the consistency of 
the initial state and the goal under $\statesmap$.
On the other hand, if the two tasks are unrelated, it is
not clear what is the benefit of connecting these tasks together. 
We formulate the connection by extending the definition of abstraction to PaRL tasks. 

\begin{definition}
\label{def:abs}
Let $E = \tuple{\mdptask, \ptask, \statesmap}$ be a PaRL task and
$\cT_{\ptask}=\tuple{\states', T_{\ptask}, S_{\ast}}$ be the transition graph of $\ptask$. 
We say that $\tuple{\ptask,\statesmap}$ is an abstraction of $\mdptask$ 
if for all $\tuple{s,\action,t}$ we have $\transprob(t|s,\action)>0$, iff
$\tuple{\statesmap(s),\op,\statesmap(t)}\in T_{\ptask}$ for some $\op\in\ops$ or
$\statesmap(s)=\statesmap(t)$. 
We call such PaRL tasks proper.
\end{definition}

The idea behind the definition of PaRL task 
is to allow the specification of  some of the 
functionality of the reinforcement learning task in a declarative way. 
In what follows, we only consider proper PaRL tasks.
Next, we link the RL task $\mdptask$ and the planning task $\ptask$ by an options framework.
\begin{definition}
\label{def:plan_options}
For a PaRL task $E\!:=\!\tuple{\mdptask,\!\ptask,\!\statesmap}$,
\textbf{plan options} are:
(1) for each operator $\op\!\in\!\ops$ in $\ptask$,
an \textbf{operator option}
$\option_{\op}\!:=\!\tuple{\initset{\option_{\op}},\!\pi_{\option_{\op}},\!\termset{\option_{\op}}}$ 
with 
$\initset{\option_{\op}}\!:=\!\{\!\state\!\in\!\states\!\!\mid\!\!\pre(\op)\!\subseteq\!\statesmap(\state)\}$
and 
$\termset{\option_{\op}} := \{ \state \in \states \mid (\prv(\op) \cup \eff(\op)) \subseteq \statesmap(\state) \}$,
and
(2) 
a single \textbf{goal option}
$\goaloption:=\langle  \initset{\goaloption}, \pi_{\goaloption}, \termset{\goaloption}\rangle$ 
with 
$\initset{\goaloption} := \{ \state \in \states \mid \goal \subset \statesmap(\state)\}$ and 
$\termset{\goaloption}\!:=\!\Goal$.
\end{definition}

Previous attempts in the literature have suggested utilizing planning operators 
to define options \cite{lyu-et-al-aaai2019,illanes-et-al-icaps2020}. 
However, earlier works assume an 
additional domain knowledge associating planning operators with conditions over propositional variables.
Here, we do not require such additional input, relying solely on the planning task. 

Denoting by $\cO_{\mdptask}$,
a set of plan options induces an SMDP 
$\mdptask':=\tuple{\states, \cO_{\mdptask}, \transprob, \transreward, S_0, \Goal, \gamma}$,
where we replace the primitive actions $\actions$ in $\mdptask$ 
with  $\cO_{\mdptask}$.
Next, 
we define a transition graph $\cT_{\mdptask'}$ of  $\mdptask'$
in which a multi-step state transition of an option $\option_{\op}$ is collapsed to 
a single labeled transition that 
connects each 
state $s \in \initset{\option_{\op}}$ to the states $t \in \termset{\option_{\op}}$.
\begin{definition}
Given a PaRL task $E:=\tuple{\mdptask, \ptask, \statesmap}$,
a transition graph of the SMDP 
$\mdptask':=\tuple{\states, \cO_{\mdptask}, \transprob, \transreward, s_0, \Goal, \gamma}$
is a triple 
$\cT_{\mdptask'}:=\tuple{\states, T_{\mdptask'}, \Goal}$,
where $\states$ is the states of $\mdptask$,
$T_{\mdptask'}$ is a set of non-deterministic labeled transitions
$\{ \tuple{s, \op, t} \mid s \in \initset{\option_{\op}}, t \in \termset{\option_{\op}},
P(t | s, \option_{\op}) >0\}$,
and $\Goal$ is the goal states in $\mdptask$.
\end{definition}

\subsection{Frames and Decompositions in Plan Options}
Although we do not assume to have an exact model of $\mdptask$, 
it is desirable to have an annotating planning task $\ptask$
that behaves similar to $\mdptask$.
To characterize the similarity between the two tasks,
we introduce a context and frame of an option $\option_{\op}$ in
an RL state $\state$
to capture 
the subset of facts in the planning task
that prevail 
when applying a planning operator $\op$ 
to the planning state $\statesmap(\state)$,
namely $\statesmap(\state) \cap \statesmap(\state)\applied{\op}$.
\begin{definition}
For an operator $\op$ and its option $\option_{\op}$,
we define the \textbf{context of an operator option in state}
$\state\in\states$ by
$
\cC_{\option_{\op}}(\state):=\statesmap(\state)\setminus (\pre(\op) \cup \eff(\op))$.
The \textbf{frame of an operator option in state} $\state \in \states$
is $\mathcal{F}_{\option_{\op}}(\state):= \prv(\op) \cup \cC_{\option_{\op}}(\state)$.
A partial frame of an option in $\state$
$\mathcal{F}^p_{\option_{\op}}(\state)$
is a subset of $\mathcal{F}_{\option_{\op}}(\state)$.
\end{definition}

We say that a PaRL task $E$ with a set of plan options $\cO_{\mdptask}$ is frame preserving
if $\mathcal{F}_{\option_{\op}}(s) = \mathcal{F}_{\option_{\op}}(t)$
for every $\tuple{s, \op, t} \in T_{\mdptask'}$ and operator $\op \in \ops$.

\begin{theorem}
If a PaRL task $E$ with plan options $\options_{\mdptask}$
is frame preserving,
then $\cT_{\ptask}$ and $\cT_{\mdptask'}$ are bisimilar.
\end{theorem}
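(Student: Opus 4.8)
The plan is to exhibit an explicit bisimulation relation between $\cT_{\ptask}$ and $\cT_{\mdptask'}$ and verify the two standard bisimulation conditions (forward and backward matching of transitions), using the frame-preserving hypothesis to control what the planning state looks like after an operator option terminates. The natural candidate relation is the graph of the state map: $s \sim \statesmap(s)$ for $s \in \states$, i.e.\ an MDP state $s$ is related to the planning state $\statesmap(s)$. Since $\statesmap$ is surjective, every abstract state is covered. It remains to check that this relation is a bisimulation with respect to the labeled transition systems $\cT_{\mdptask'} = \tuple{\states, T_{\mdptask'}, \Goal}$ and $\cT_{\ptask} = \tuple{\states', T_{\ptask}, \Goal}$, and that it respects the designated goal sets (which follows from the PaRL definition: $\goal$ is consistent with $\statesmap(s)$ for all $s \in \Goal$, and conversely $\termset{\goaloption} = \Goal$).

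First I would handle the forward direction: given $\tuple{s, \op, t} \in T_{\mdptask'}$, I must produce a matching transition $\tuple{\statesmap(s), \op, \statesmap(t)} \in T_{\ptask}$. By definition of $T_{\mdptask'}$ we have $s \in \initset{\option_{\op}}$, so $\pre(\op) \subseteq \statesmap(s)$, meaning $\op$ is applicable in $\statesmap(s)$; hence $\tuple{\statesmap(s), \op, \statesmap(s)\applied{\op}} \in T_{\ptask}$. The content of the argument is to show $\statesmap(t) = \statesmap(s)\applied{\op}$. We know $t \in \termset{\option_{\op}}$, so $(\prv(\op) \cup \eff(\op)) \subseteq \statesmap(t)$, which pins down the "effect part" of $\statesmap(t)$ to agree with $\statesmap(s)\applied{\op}$ on $\vars(\eff(\op))$ and on the prevail variables. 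The frame-preserving hypothesis, $\mathcal{F}_{\option_{\op}}(s) = \mathcal{F}_{\option_{\op}}(t)$, together with the definitions of context and frame, forces the remaining facts (the context $\cC_{\option_{\op}}$, i.e.\ the variables outside $\pre(\op) \cup \eff(\op)$) to agree between $\statesmap(s)$ and $\statesmap(t)$; and $\statesmap(s)\applied{\op}$ leaves exactly those variables untouched. A small case analysis over the variable partition $\vars = \vars(\eff(\op)) \,\dot\cup\, (\text{prevail vars}) \,\dot\cup\, (\text{context vars})$ then gives $\statesmap(t) = \statesmap(s)\applied{\op}$, completing the forward step. (The case $\statesmap(s) = \statesmap(t)$ from Definition~\ref{def:abs} does not arise here, since $T_{\mdptask'}$ transitions are labeled by operators with nonempty effect on the abstract state.)

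Next I would handle the backward direction: given a related pair $s \sim \statesmap(s)$ and an abstract transition $\tuple{\statesmap(s), \op, s''} \in T_{\ptask}$, I must find $t \in \states$ with $\tuple{s, \op, t} \in T_{\mdptask'}$ and $\statesmap(t) = s''$. Here $s'' = \statesmap(s)\applied{\op}$ and applicability gives $\pre(\op) \subseteq \statesmap(s)$, i.e.\ $s \in \initset{\option_{\op}}$. The transition $\tuple{s,\op,t} \in T_{\mdptask'}$ requires a $t \in \termset{\option_{\op}}$ with $P(t \mid s, \option_{\op}) > 0$. This is where I would invoke properness (Definition~\ref{def:abs}) and the structure of operator options: because $\tuple{\statesmap(s), \op, \statesmap(s)\applied{\op}} \in T_{\ptask}$, properness guarantees the existence of MDP transitions realizing this abstract step, and the call-and-return semantics of $\option_{\op}$ means $\option_{\op}$ run from $s$ terminates (in $\termset{\option_{\op}}$) at some such $t$; frame preservation then forces $\statesmap(t) = \statesmap(s)\applied{\op} = s''$ by the same variable-partition argument as above.

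The main obstacle I anticipate is the backward direction — specifically, arguing that an operator option actually \emph{reaches} its termination set with positive probability from every state in its initiation set, and that the resulting abstract state is $\statesmap(s)\applied{\op}$ rather than some other state in $\termset{\option_{\op}}$. The termination set $\termset{\option_{\op}}$ is defined purely by the facts $\prv(\op) \cup \eff(\op)$, so a priori it could contain abstract states disagreeing with $\statesmap(s)\applied{\op}$ on context variables; it is precisely frame preservation that rules this out along actual transitions $\tuple{s,\op,t} \in T_{\mdptask'}$. I would also need to be careful that $T_{\mdptask'}$ is defined via $P(t \mid s, \option_{\op}) > 0$, whose positivity depends on the (not fully specified) option policy reaching termination; I expect the intended reading is that each operator option is assumed to be proper in the MDP sense (terminates almost surely, or at least with positive probability, from its initiation set), and I would state this assumption explicitly where it is used. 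With that in hand, the two matching conditions close and the bisimulation — together with the goal-set compatibility — establishes $\cT_{\ptask} \sim \cT_{\mdptask'}$.
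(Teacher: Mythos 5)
Your core argument is the same as the paper's: the heart of both proofs is the computation that frame preservation, together with $s\in\initset{\option_{\op}}$ and $t\in\termset{\option_{\op}}$, forces $\statesmap(t)=\statesmap(s)\applied{\op}$ for every $\tuple{s,\op,t}\in T_{\mdptask'}$ (the paper writes this as $\statesmap(t)=[\statesmap(s)\setminus(\pre(\op)\cup\cF_{\option_{\op}}(s))]\cup(\eff(\op)\cup\cF_{\option_{\op}}(s))$, which is exactly your variable-partition case analysis). The packaging differs slightly: you take the bisimulation relation to be the graph of $\statesmap$ between $\cT_{\mdptask'}$ and $\cT_{\ptask}$, whereas the paper works with the kernel relation $\{\tuple{s,t}\mid\statesmap(s)=\statesmap(t)\}$ on MDP states and shows $\op$-transitions out of $\statesmap$-equivalent states land in $\statesmap$-equivalent states; these amount to the same thing. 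Where you genuinely go beyond the paper is the backward (zig) direction: the paper's proof never argues that an abstract transition $\tuple{\statesmap(s),\op,\statesmap(s)\applied{\op}}\in T_{\ptask}$ is matched by some $\tuple{s,\op,t}\in T_{\mdptask'}$, which requires $P(t\mid s,\option_{\op})>0$ for some $t\in\termset{\option_{\op}}$, i.e.\ that the option actually terminates with positive probability from every state of its initiation set. You are right that this is not guaranteed by the definitions (the option policies are unspecified), and your explicit assumption to that effect is exactly what is needed to make the bisimilarity claim, as opposed to a one-directional simulation, go through; the paper's one-line proof silently elides this point. So your proposal is correct modulo that clearly flagged assumption, and is in fact more complete than the published argument.
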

\begin{proof}
Consider a binary relation 
$\{\tuple{s, t}\!\!\in\!\!\states\!\times\!\states\!\mid\!\statesmap(s)\!\!=\!\!\statesmap(t)\}$.
For each $\op \in \ops$,
every $\tuple{s, \op, t} \in T_{\mdptask'}$
satisfies
$\statesmap(t)\!\!=\!\!\big[\statesmap(s)\setminus\!\big(\pre(\op)\!\cup\!\cF_{\option_{\op}}(\state)\big)\big]
\cup\big(\eff(\op)\!\cup\!\cF_{\option_{\op}}(\state)\big)\!=\!\statesmap(\state)\applied{\op}.$
For a transition $\tuple{t, \op, t'} \in T_{\mdptask'}$ such that  $\statesmap(t)=\statesmap(s)$,
$\statesmap(t')=\statesmap(t)\applied{\op}=\statesmap(s)\applied{\op}$.
\end{proof}

The desiderata in HRL is that 
a task hierarchy in $\mdptask$
captures the decomposition  
into sub-MDP tasks that are easier to solve in a local state space,
and those sub-tasks are reusable in similar problems.
It is often claimed that HRL improves sample efficiency,
and 
a sub-problem analysis by \citet{wen-et-al-nips2020}
shows
that HRL methods can improve the sample efficiency
if the total sum of 
the size of each partitioned state space is smaller than 
the size of the original state space.
Following this intuition behind the MDP decomposition in HRL,
we now characterize the sub-problem decomposition imposed by the frame-constrained option MDPs.


%
%
\begin{definition}
Given a PaRL task $E\!\!:=\!\!\tuple{\mdptask,\ptask,\statesmap}$ and a plan option 
$\option_{\op}\!\!:=\!\!\tuple{\initset{\option_{\op}},\!\pi_{\option_{\op}},\!\termset{\option_{\op}}}$,
a \textbf{frame constrained option MDP} is 
an MDP for a Markovian option $\option_{\op}$,
defined as
$$\mdptask_{\op, s_0}:=
\langle\states_{\cF_{\op}(s_0)}, \actions, 
\transprob_{\cF_{\op}(s_0)}, \transreward, s_0, \termset{\option_{\op}},
\cD_{\cF_{\op}(s_0)}, \gamma\rangle,$$
where
$\states_{\cF_{\op}(s_0)}$ is the local states,
$\transprob_{\cF_{\op}(s_0)}$ is a constrained state transition probability,
the initial state $s_0$ is a state in $\initset{\option_{\op}}$,
$\termset{\option_{\op}}$ are the goal states,
and 
$\cD_{\cF_{\op}(s_0)}$ is 
a set of fictitious transition constraints
$\{\cF_{\option_{\op}}(t)=\cF_{\option_{\op}}(s_0)\mid
\forall \tuple{s,\action,t}\in\cT_{\mdptask_{\op, s_0}}, \pi_{\option_{\op}}(\action|s)>0\}$,
enforcing the state transitions to preserve $\cF_{\op}(s_0)$.
\end{definition}
Note that 
we modified $\transprob_{\cF_{\op}(s_0)}$ 
in $\mdptask_{\op, s_0}$
from the original $\transprob$ in $\mdptask$
so that all the transitions don't violate $\cD_{\cF_{\op}(s_0)}$:
assign $\transprob_{\cF_{\op}(s_0)}(t|s, \action) = 0$ to
all $\tuple{s, \action, t} \in \cT_{\mdptask_{\op, s_0}}$ such that $\cF_{\op}(s_0) \not\subset \statesmap(t)$, and then normalize conditional probability.

Introducing the frame constraints to each option MDP
reduces the size of the state space subject to the number of facts in the frame of the option.
\begin{theorem}
Given a PaRL task $E:=\tuple{\mdptask,\ptask,\statesmap}$
and 
two frame-constrained option MDPs 
$\mdptask^p_{\op, s_0}$ and $\mdptask^q_{\op, s_0}$
induced by 
partial frames 
$\cF_{\option_{\op}}^{p}(s_0)$ and $\cF_{\option_{\op}}^{q}(s_0)$,
if
$\cF_{\option_{\op}^{p}}(s_0) \subset \cF_{\option_{\op}^{q}(s_0)}$,
the states of $\mdptask^q_{\op, s_0}$ are states of $\mdptask^p_{\op, s_0}$. 
\end{theorem}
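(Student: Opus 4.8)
The plan is to show the containment of state sets by a direct argument on what it means for a state to belong to a frame-constrained option MDP. By definition, the states $\states_{\cF_{\op}(s_0)}$ of a frame-constrained option MDP $\mdptask_{\op,s_0}$ are exactly the RL states reachable from $s_0$ under transitions that never violate the fictitious constraint set $\cD_{\cF_{\op}(s_0)}$, i.e. transitions $\tuple{s,\action,t}$ with $\cF_{\option_{\op}}(t) = \cF_{\option_{\op}}(s_0)$ (equivalently, after the $\transprob_{\cF_{\op}(s_0)}$ modification, those with $\cF_{\op}(s_0)\subseteq\statesmap(t)$). So I first want to make precise that $\state\in\states_{\cF_{\op}(s_0)}$ iff there is a path $s_0,\action_1,s_1,\dots,\action_n,s_n=\state$ in $\cT_{\mdptask}$ with each step having positive probability under $\pi_{\option_{\op}}$ and each intermediate state $s_i$ satisfying $\cF_{\op}(s_0)\subseteq\statesmap(s_i)$.

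The key observation is monotonicity of this reachability predicate in the frame: since $\cF_{\option_{\op}^{p}}(s_0)\subset\cF_{\option_{\op}^{q}}(s_0)$, any transition that preserves the larger frame $\cF^q$ also preserves the smaller frame $\cF^p$ --- because $\cF^q_{\op}(s_0)\subseteq\statesmap(t)$ implies $\cF^p_{\op}(s_0)\subseteq\statesmap(t)$, and symmetrically for the source state. Hence the transition relation $\cT_{\mdptask^q_{\op,s_0}}$ is a sub-relation of $\cT_{\mdptask^p_{\op,s_0}}$ on the common source states: every edge surviving the $\cF^q$-constraint also survives the $\cF^p$-constraint. The second step is then a straightforward induction on path length: the initial state $s_0$ lies in both (it trivially satisfies both frame constraints, being the reference state); and if $s_i\in\states_{\cF^q_{\op}(s_0)}$ has been shown to lie in $\states_{\cF^p_{\op}(s_0)}$, then any $\cF^q$-admissible successor $s_{i+1}$ is also $\cF^p$-admissible and reachable in $\mdptask^p_{\op,s_0}$, so $s_{i+1}\in\states_{\cF^p_{\op}(s_0)}$. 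Therefore $\states_{\cF^q_{\op}(s_0)}\subseteq\states_{\cF^p_{\op}(s_0)}$, which is the claim.

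One subtlety I would be careful about is that the transition probabilities $\transprob_{\cF^p_{\op}(s_0)}$ and $\transprob_{\cF^q_{\op}(s_0)}$ are renormalized differently, so the two option MDPs do not literally share edge weights --- but the statement only concerns the state sets, i.e. the supports, so the renormalization is irrelevant as long as zeroing out a strict subset of outgoing transitions does not kill all of them (a state with no admissible outgoing transition is still a state, just absorbing). A second point worth a sentence is that one should read "$\states_{\cF^q_{\op}(s_0)}$ are states of $\mdptask^p_{\op,s_0}$" as set inclusion of the reachable local state spaces, matching the informal remark preceding the theorem that tightening the frame shrinks the state space. The main (mild) obstacle is pinning down the exact definition of the local state set $\states_{\cF_{\op}(s_0)}$ --- the excerpt describes it only as "the local states" and specifies the constraints via $\cD_{\cF_{\op}(s_0)}$ --- so the proof hinges on committing to the reachability-under-constraints reading; once that is fixed, the argument is the two-line monotonicity-plus-induction above.
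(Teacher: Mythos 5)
Your proposal is correct and rests on exactly the same observation as the paper's proof: since $\cF_{\option_{\op}}^{p}(s_0) \subset \cF_{\option_{\op}}^{q}(s_0) \subseteq \statesmap(\state)$, any state satisfying the $q$-frame condition satisfies the $p$-frame condition. The paper applies this directly as a membership argument (every $\state$ in the $q$-local state space has $\cF^{q}\subseteq\statesmap(\state)$, hence lies in the $p$-local state space), so your reachability induction is just an extra wrapper around the same one-line inclusion, made necessary only by your more conservative reading of ``local states.''
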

\begin{proof}
Let $\states_{p}$ and $\states_{q}$
denote the states of the 
$\mdptask^p_{\op, s_0}$ and $\mdptask^q_{\op, s_0}$.
For every $\state \in \states_q$, 
we can see that $\state \in \states_p$
since 
$\cF_{\option_{\op}}^{p}(s_0) \subset \cF_{\option_{\op}}^{q}(s_0) \subset \statesmap(\state)$.
\end{proof}
If 
all option MDP are frame-constrained and the PaRL task is frame-preserving,
we may have two advantages:
(1) improved sample efficiency due to the reduction in the state space size for learning options, and
(2) options re-usability by composition, relying solely on the symbolic annotation.

\subsection{Intrinsic Rewards for Plan Options}
In practice, 
we don't assume 
that the annotating planning task $\ptask$ simulates 
the underlying $\mdptask$, and
furthermore, 
it is impossible to constrain the 
transitions in the MDP task in RL.
Therefore,
we relax all the constraints in the frame-constrained option MDPs and 
absorb those constraints in the objective function 
as an intrinsic reward to the option learning agent.

\begin{definition}
Given a PaRL task $E\!\!:=\!\!\tuple{\mdptask,\ptask,\statesmap}$ and a plan option 
$\option_{\op}\!\!:=\!\!\tuple{\initset{\option_{\op}},\!\pi_{\option_{\op}},\!\termset{\option_{\op}}}$,
a \textbf{frame penalized option MDP}
is a tuple 
$\overline{\mdptask}_{\op, s_0}:=\{\states, \actions, P, \overline{\transreward}, 
\overline{s}_0, \termset{\option_{\op}}, \gamma\}$,
where
we replace 
the reward function,
the initial state,
and the goal of the MDP task $\mdptask$
with 
an intrinsic reward $\overline{\transreward}$,
an initial state $\overline{s}_0 \in \initset{\option_{\op}}$,
and
$\termset{\option_{\op}}$, respectively.
Under the objective that maximizes the expected sum of discounted rewards,
the intrinsic reward $\overline{\transreward}:=\states\rightarrow\mathbb{R}$ is given by
$$
\overline{\transreward}(\state):=
\!\!\!\!\!\!\!\!\!\!\sum_{\var\in\vars\big(\cF_{\option_{\op}}(\overline{s}_0)\big)}
\!\!\!\!\!\!\!\!\!\!c_1 \cdot \mathbb{I}\big(\statesmap(\state)[\var]\neq\cF_{\option_{\op}}[\var]\big)
+
c_2 \cdot 
\mathbb{I}\big(\state \not\in \termset{\option_{\op}}\big)
,
$$
where 
$\mathbb{I}$ is an indicator function and $c_1$ and $c_2$ are negative rewards. 
\end{definition}
Note that the state space of the frame penalized option MDP 
can be as large as the original state space.
We only hope that
the intrinsic reward obtained in the planning space
guides the option policy learning agent to visit states 
that are more likely to preserve the frame of an option.
%
%
In the absence of knowledge about the underlying dynamics of $\mdptask$, 
an SMDP task $\mdptask'$ induced by the plan options 
also solves $\mdptask$ yet with a lower expected return
if $\ptask$ does not have a dead-end.
Namely, 
if $\mdptask$ reaches the goal in discounted stochastic shortest path model \cite{bertsekas2018abstract}, 
$\mdptask'$ will reach the goal with a finite yet larger number of steps. 
If $\mdptask$ has dead-ends and maximizes the probability of reaching the goal \cite{kolobov-et-al-uai2012},
$\mdptask'$ will also reach the goal with a lower yet non-zero probability.

\begin{algorithm}[t]
\caption{Online Option Learning with a PaRL Task}
\label{alg:online-parl}
\begin{algorithmic}[1]
\Require PaRL task $E\tuple{\mdptask, \ptask, \statesmap}$. 
\Ensure Option policies $\pi_{\option_{\op}}(\action|\state)$.
\State Initialize trajectory buffer $B$
\State Initialize a set $D$ for storing options 
\While {$iter<N$}
    \Statex{\textbf{\phantom{xx}rollout samples from the current option policies}}
    \While {$iter_{\text{rollout}}<N_{\text{rollout}}$}
        \State $s \leftarrow$ current state
        \State Select an option $\option_{\op}$ by AI planner
        \If {$\option_{\op} \not\in D$}
            \State Create $\option_{\op}$, Initialize $\pi_{\option_{\op}}$, 
            Add $\option_{\op}$ to $D$
        \EndIf
        \While {$s \not\in \termset{\option_{\op}}$}
            \State Sample $(s, a, r_e, t)$ using $\pi_{\option_{\op}}$
            \State Compute intrinsic reward $r_i$ 
            \State Store $(\option_{\op}, s, a, r_e, r_i)$ to buffer $B$
            \State $s \leftarrow t$
        \EndWhile
    \EndWhile
    \Statex{\phantom{xx}\textbf{train policies}}
    \For {each option $\option_{\op} \in D$}
        \State Train option policy function $\pi_{\option_{\op}}$ with RL
    \EndFor
\EndWhile
\end{algorithmic}
\end{algorithm}

\begin{algorithm}[t]
\caption{HplanPPO: Online Option Learning with PPO}
\label{alg:smdp}
\begin{algorithmic}[1]
\Require PaRL task $E\tuple{\mdptask, \ptask, \statesmap}$. 
\Ensure option policies $\pi_{\option_{\op}}(\action|\state)$
\State Initialize trajectory buffer $B$
\State Initialize a dictionary $D[L(s)\!\!:\!\!\{\option_{\op}\}]$ for storing options 
\While {$iter<N$}
    \State $s \leftarrow$ current state
    \State Initialize a set $E$ for storing the unrolled options
    \While {$iter_{\text{rollout}}<N_{\text{rollout}}$}
        \State $\tuple{\op_1, \op_2, \ldots, \op_k} \leftarrow$ Planner($L(s)$, $\goal$)
        \State $t' \leftarrow L(s)$
        \For{each option $\op$ in $\tuple{\op_1, \op_2, \ldots, \op_k}$}
            \If{$\op \not\in D.keys()$}
                \State Initialize $\pi_{O_{\op}}$ and Add $\op$ to $D[t']$    
                \State $t' \leftarrow t'\applied{\op}$
            \EndIf
        \EndFor
        \State Select $\option_{\op}$ from $D[L(s)]$ and Add $\option_{\op}$ to $E$
        \While{$s \not\in \termset{\option_{\op}}$}
            \State Generate on-policy sample $(s, a, r_e, t)$
            \State Compute intrinsic reward $r_i$ 
            \State Store $(\option_{\op}, s, a, r_e, r_i)$ to buffer $B$
            \State $s \leftarrow t$
        \EndWhile
    \EndWhile
    \For {each option $\option_{\op} \in E$}
        \State Train $\pi_{\option_{\op}}$ by PPO
    \EndFor
\EndWhile
\end{algorithmic}
\end{algorithm}

\begin{figure*}
\centering
\begin{subfigure}[b]{0.24\textwidth}
  \includegraphics[width=\textwidth]{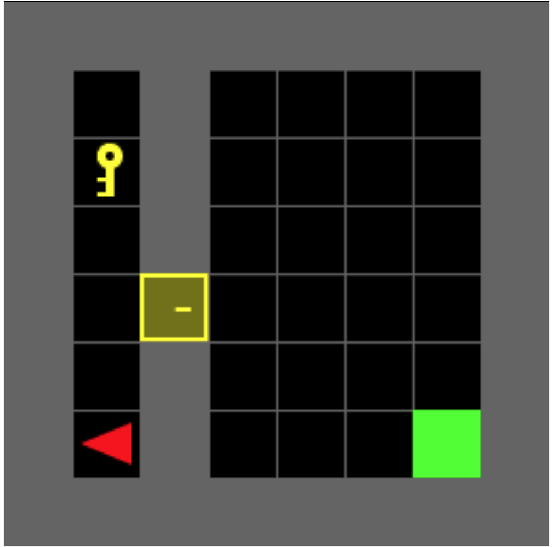}
  \caption{MiniGrid Door Key}
  \label{f11}
\end{subfigure}
\begin{subfigure}[b]{0.24\textwidth}
  \includegraphics[width=\textwidth]{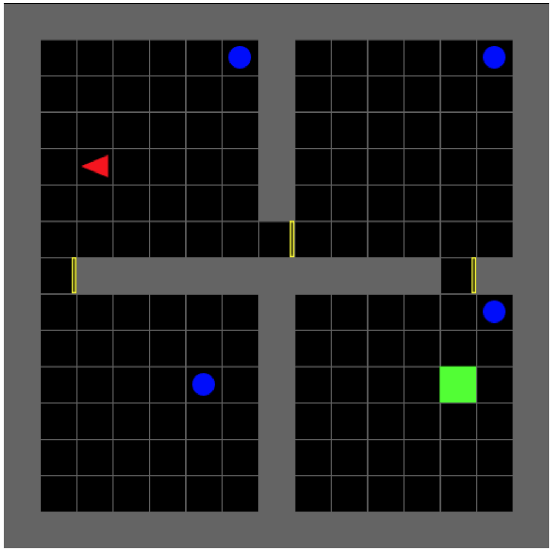}
  \caption{4 Rooms with Balls}
  \label{f12}  
\end{subfigure}
\begin{subfigure}[b]{0.24\textwidth}
  \includegraphics[width=\textwidth]{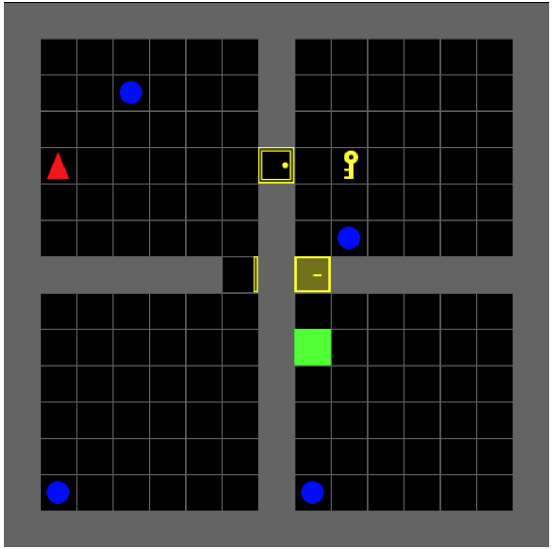}
  \caption{4 Rooms with a Locked Door}
  \label{f13}
\end{subfigure}
\begin{subfigure}[b]{0.24\textwidth}
  \includegraphics[width=\textwidth]{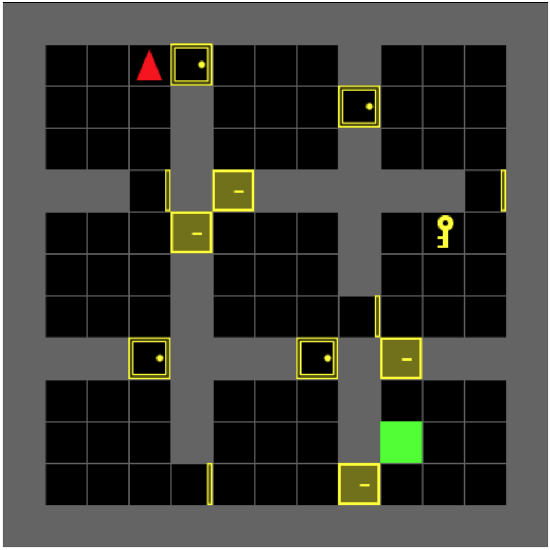}
  \caption{9 Rooms with Locked Doors}
  \label{f14}
\end{subfigure}
\caption{Example of \texttt{MiniGrid}-based instances:
From left to right, we show one instance from each RL problem domain
with additional task structures. 
A single planning task annotates each problem domain, and 
each domain randomizes the location of the objects. 
}
\label{f1}
\end{figure*}

\section{Solving PaRL Task}
In this section,
we present HRL algorithms for solving PaRL tasks $E:=\tuple{\mdptask,\ptask,\statesmap}$.
%
For any pair of initial state $s_0 \in \states$ and a goal $s_g \in \Goal$ in $\mdptask$,
we can generate a sequence of options \{$\option_{o_1}, \option_{o_2}, \ldots, \option_{o_k}$\}
from a plan in $\ptask$ that reaches the goal state $L(s_g) \in \states'$ from the initial state $L(s_0) \in \states'$.
Therefore, we can invoke AI planners in two ways,
either precompute those option-level plans offline or 
generate plans while training option policies online.
In this paper, we focus on the online approach
that integrates AI planner as a higher level control agent
and model-free RL as lower-level agents in HRL architecture.

Algorithm \ref{alg:online-parl}
shows the outline of HRL agent 
that learns options online with a PaRL task. 
The algorithm alternates rollout and training phases until the limit on the number of iterations is reached.
In the rollout phase, 
HRL agent selects an option $\option_{\op}$ using any AI planner
by solving the annotating planning task $\Pi$ with the initial planning state
$s'_{0}$ being the current planning state $L(s)$
and returning the applicable planning operator in $s'{0}$ (line 6).
Note that this re-planning at every option selection step
formulate the option selection 
as an action selection in online planning \cite{mausam-kolobov-2012},
it is much more efficient than learning-based approaches.
If $\option_{\op}$ was not created before, 
we create the option and initialize the policy $\pi_{\option_{\op}}$ and add it to a container
(lines 7-8).
Next, sample trajectories are generated by using $\pi_{\option_{\op}}$ until it terminates,
and 
for each one-step state transition, we compute the intrinsic reward following Definition 7
(lines 10-12).
Then, HRL agent updates the option policy 
using the samples stored in the buffer in the training phase
by any model-RL algorithm (line 15).

In our experiments, 
we integrated $A^{*}$ algorithm implemented in \texttt{Pyperplan} \cite{alkhazraji2016pyperplan}
with double DQN (\texttt{DDQN}) \cite{van2016deep} and Proximal Policy Optimization \texttt{PPO} \cite{PPO}
for option policy training, yielding two algorithms: 
HRL with integrated planning and PPO (\texttt{HplanPPO}) and HRL with integrated planning and DDQN
\texttt{HplanDDQN}.
Algorithm \ref{alg:smdp} shows \texttt{HplanPPO},
creating a separate \texttt{PPO} agent per option. Option training 
phases do not share samples (an on-policy method).
\texttt{HplanDDQN} is similar to \texttt{HplanPPO} except 
for it only creates a single \texttt{DDQN} agent 
that augments the input to \texttt{DDQN} network with a one-hot encoding of option labels, allowing 
reuse of the samples for training option policies.


\section{Experiments}
All experiments are conducted in a cluster computing environment
equipped with Intel (R) Xeon(R) Gold 6258R CPUs 
and NVIDIA A-100/V-100 GPUs. 
For each run, we limited computational resources to utilize up to 16 GB of memory with 2 CPUs and 1 GPU.
For HRL experiments, 
we created two benchmark sets.
The first benchmark set extends 
\texttt{MiniGrid} environment \cite{gym_minigrid}
with more complex task structures than
the predefined environments
inspired by \texttt{BabyAI} environment \cite{babyai_iclr19}.
During training RL/HRL algorithms, 
we reset the environments with the same random seeds from 0 to 999 and 
excluded existing baseline algorithms relying on tabular Q-learning.

The second benchmark set extends
four rooms navigation domain on a larger scale with a maze-like topology.
Namely, we extend to 4 rooms on a 20x20 grid and
12 rooms on a 16x16 grid only allowing
a single path that connects all the rooms.

\paragraph{Algorithms}
We evaluate \texttt{HplanPPO} and \texttt{HplanDDQN} algorithms
and compare them with existing baseline algorithms
from the flat RL counterparts, \texttt{PPO} and \texttt{DDQN},
and Deep Hierarchical Reward Machines  (\texttt{HRM}) \cite{icarte2022reward}.
We implemented all algorithms 
by extending \texttt{stable-baselines3} \cite{stable-baselines3},
except for \texttt{HRM}, which offers open-source implementation 
by the original authors. 
In the experiments in \texttt{MiniGrid} environments, 
we evaluated each algorithm at least 5 times,
and report the average, the minimum and maximum range, and 95 percent
confidence intervals in the plots.
Due to the space limit, we will provide
details 
on general experiment setups and
implementation of 
deep neural network architectures and 
hyper-parameter choices in the Appendix.

\begin{figure*}[t]
\centering
    \begin{subfigure}[b]{0.24\textwidth}
    \includegraphics[width=\textwidth]{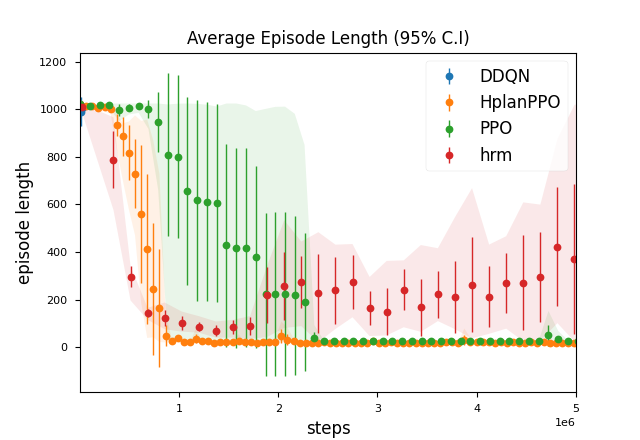}
      \caption{DoorKey}
      \label{f21}  
    \end{subfigure}
    \begin{subfigure}[b]{0.24\textwidth}
      \includegraphics[width=\textwidth]{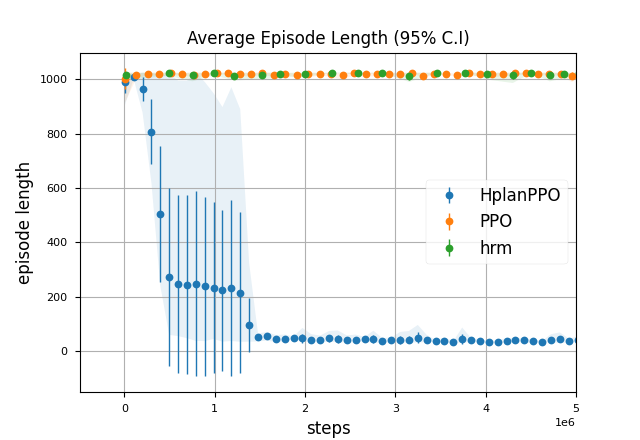}
      \caption{4 Rooms with Balls}
    \label{f22}  
    \end{subfigure}
    \begin{subfigure}[b]{0.24\textwidth}
      \includegraphics[width=\textwidth]{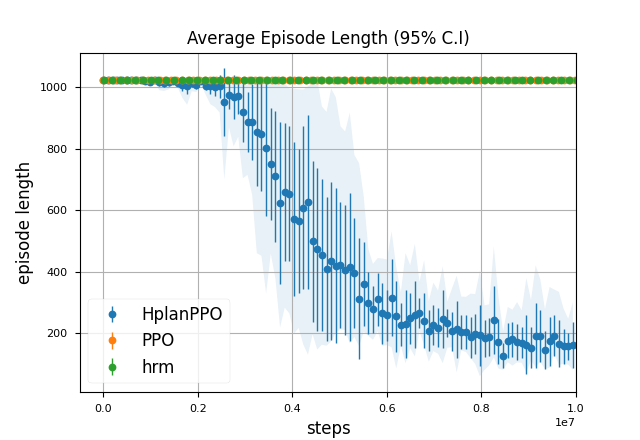}
      \caption{4 Rooms with a Locked Door}
      \label{f23}
    \end{subfigure}
    \begin{subfigure}[b]{0.24\textwidth}
      \includegraphics[width=\textwidth]{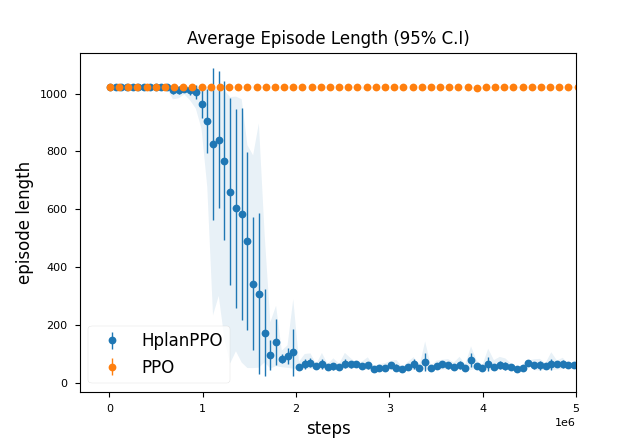}
      \caption{9 Rooms with Locked Doors}
      \label{f24}
    \end{subfigure}
\begin{subfigure}[b]{0.24\textwidth}
\includegraphics[width=\textwidth]{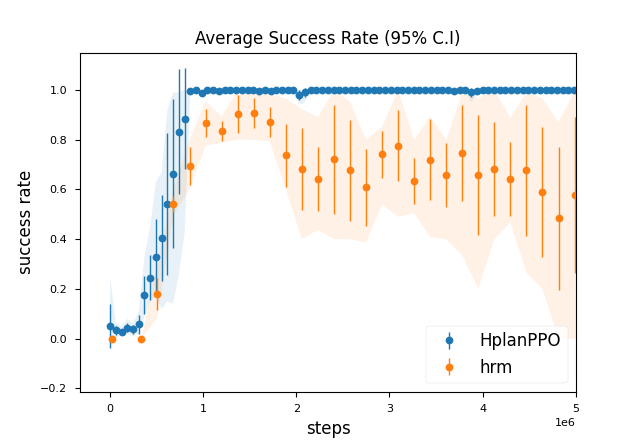}
  \caption{DoorKey}
  \label{f25}  
\end{subfigure}
\begin{subfigure}[b]{0.24\textwidth}
  \includegraphics[width=\textwidth]{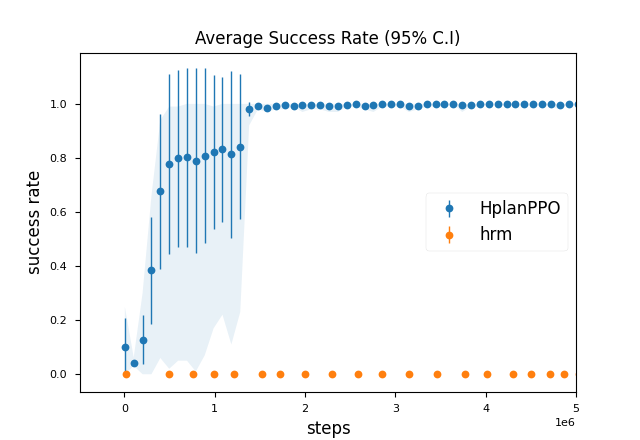}
  \caption{4 Rooms with Balls}
  \label{f26}  
\end{subfigure}
\begin{subfigure}[b]{0.24\textwidth}
  \includegraphics[width=\textwidth]{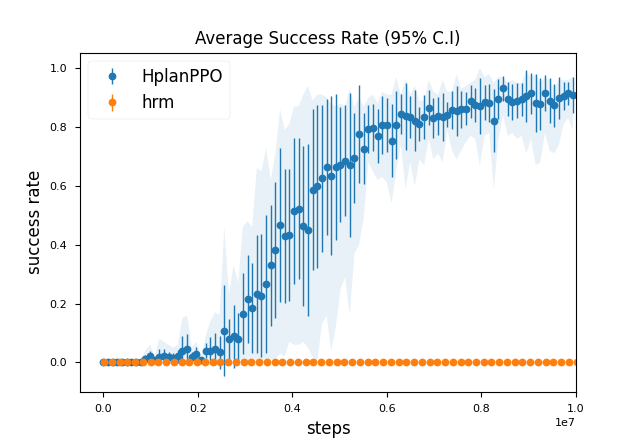}
  \caption{4 Rooms with a Locked Door}
  \label{f27}
\end{subfigure}
\begin{subfigure}[b]{0.24\textwidth}
  \includegraphics[width=\textwidth]{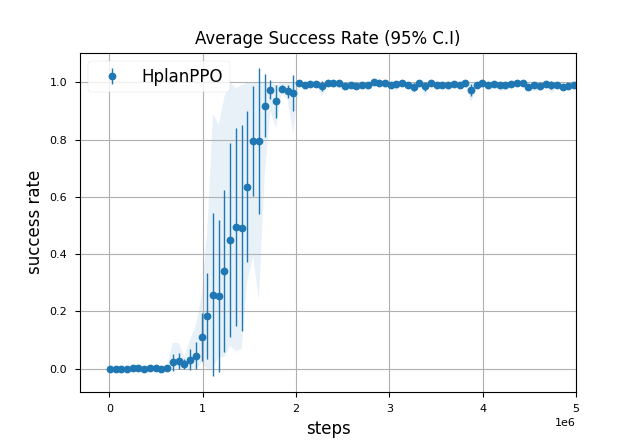}
  \caption{9 Rooms with Locked Doors}
  \label{f28}
\end{subfigure}
\caption{Average episode length (a)-(d) and average success rate (e)-(h) on \texttt{MiniGrid}-based instances. 
}
\label{f2}
\end{figure*}

\subsection{MiniGrid-based Benchmark Problems}
Figure \ref{f1} illustrates 
examples of four \texttt{MiniGrid}-based RL environments.
Each RL environment 
can vary the location of objects such as a key, door, or the goal tile.
In addition to such random variations, 
we introduced the following task structures that can be 
captured by a single planning annotation task per RL domain.
\texttt{MiniGrid Door Key} in Figure \ref{f11},
the high-level task of the agent is to pick up the key, unlock the door
and move to the goal location.
\texttt{Four rooms with Balls} and \texttt{Four rooms with a locked door}
both share a similar structure except for the 
\texttt{Four rooms with a locked door} domain requires
an agent to use the key to unlock the door.
The last \texttt{Nine rooms with locked doors} increase
the number of rooms.
Note that the symbolic state space in the planning annotation task
abstracts away several details in MDP. 
First, the agent doesn't know the precise location in the grid,
but it knows the room that the agent stays in. 
Second, balls are invisible to the planning task, and the door state is also partially known to the planning agent. Namely, the planning agent cannot detect whether the door is closed or not.
In addition to these modifications,
we also assume that the environment is \textbf{fully-observable} MDP.

\subsection{Comparison against Baselines}
Figure \ref{f2} shows 
the average episode length and the success rate 
for solving \texttt{MiniGrid} based problem domains.
Figure \ref{f21} and \ref{f25}
show the result from \texttt{MiniGrid DoorKey}.
First, we can see that both \texttt{HplanPPO}
and \texttt{HRM} indeed improved the sample efficiency
compared with flat RL baselines such as \texttt{PPO}.
Comparing \texttt{HRM} and \texttt{HplanDDQN},
we see that \texttt{HplanDDQN} does not show notable 
progress in learning. 
We also observed that flat \texttt{DDQN} showed similar trends.
Lastly, Figure \ref{f21} shows that
the performance of \texttt{HplanPPO} is more stable
than \texttt{HRM} since 
\texttt{HplanPPO} does not need to train the high-level control policy,
solving instead the planning annotation task using AI planner.

In the four rooms domain and nine rooms domain,
all other baseline algorithms, including \texttt{HRM}
did not show notable learning progress.
\texttt{HplanPPO} was the only algorithm
that showed consistent and stable learning performance.
First, the dimension of observation rapidly increase
in 4 rooms domain and 9 rooms domain, 
each generating three channels of 15x15 and 22x22 arrays.
Second, the underlying MDP of \texttt{MiniGrid} environment
only feedbacks the agent with a sparse reward,
$(1 - \frac{0.9}{l})$ with $l$ being the length of the episode.
In \texttt{HplanPPO}, 
it utilizes a more informative reward signal
due to the intrinsic rewards derived from
the planning states and the shorter episode length
per terminating options.


\begin{figure*}
\centering
\begin{subfigure}[b]{0.33\textwidth}    
\includegraphics[width=\textwidth,height=4.5cm]{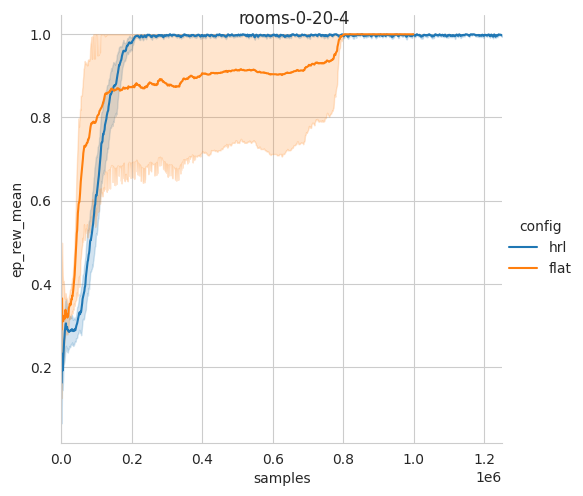}
\caption{Success rate on 20x20 grid}
\label{f31}
\end{subfigure}
\begin{subfigure}[b]{0.33\textwidth}    
\includegraphics[width=\textwidth]{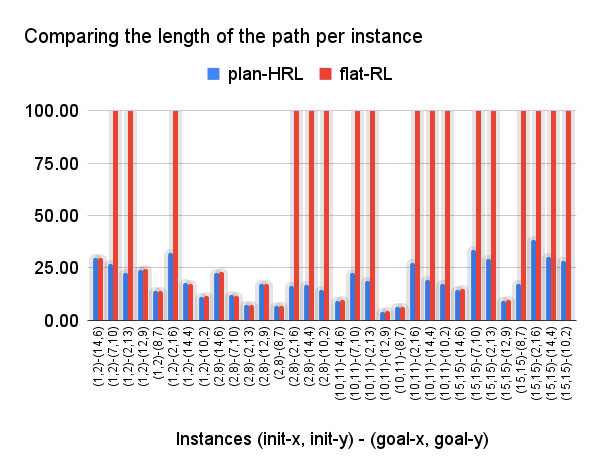}
\caption{
Average episode length}
\label{f32}
\end{subfigure}
\begin{subfigure}[b]{0.33\textwidth}    
\includegraphics[width=\textwidth]{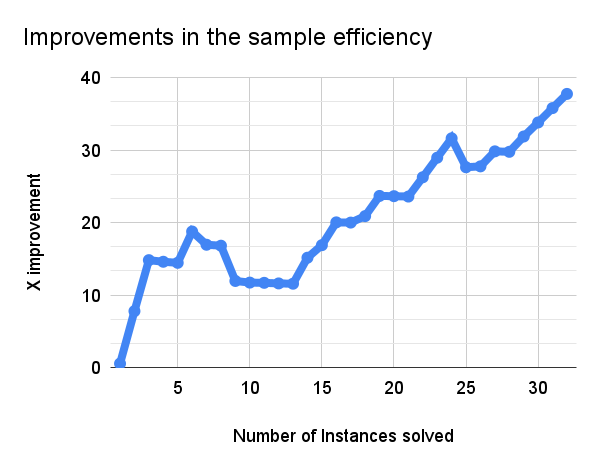}
\caption{
Sample efficiency improvement rate}
\label{f33}
\end{subfigure}   
\caption{Success rate, average length, and improvement in sample efficiency on various size instances of rooms domain. (a) 4 rooms, 20x20 grid, (b)-(c) 12 rooms, 16x16 grid.}
\label{f3}
\end{figure*}

\subsection{Rooms Domain}
\texttt{N-rooms} domain modifies the classic \texttt{4-rooms} domain by
increasing the number of rooms and varying the size of the grids.
In this problem, an agent moves on a grid, 
separated into N rooms, with narrow corridors connecting the adjacent rooms.
The agent needs to move from a given location on the grid (in a given room) to a goal location (in a goal room). 
 
In order to use the domain within our framework, 
we have created an abstract version of \texttt{N-rooms} planning domain
and defined the state mapping function $L$ from
the feature vectors in $\mdptask$ encoding the coordinate on the grid to
boolean vectors of the ground propositions in $\ptask_M$\footnote{Both the domain file and the problem files can be found in Appendix.}.
%
%
As an example, in a four rooms domain on a N$\times$N grid, 
the PaRL task captures a total of 16 options
that define all the movements between rooms and their adjacent corridors. 

%

Figure \ref{f31} shows the learning curve 
from a 4 rooms domain on a 20x20 grid, comparing 
the success ratio for reaching the goal.
In both \texttt{PPO} and \texttt{HplanPPO},
we gave a reward of +1 when reaching the goal,
and a cost of -0.05 for each step.
Comparing the results,
we can see that 
the learning curves  from \texttt{HplanPPO} (blue)
reach a higher reward much earlier than \texttt{PPO} (orange).

Figure \ref{f32} compares the length of the path 
from an initial state to a goal state in a 16x16 grid 12 rooms domain.
The x-axis shows different combinations of initial states and the goal state
and the y-axis shows the number of steps to reaching the goal.
In both \texttt{PPO} and \texttt{HplanPPO} cases, we allowed RL agents to reuse the policies trained in the previous problems.
Figure \ref{f33} shows the improvement of the sample efficiency
in \texttt{HplanPPO} when the options are reused over 32 different problem instances.

\section{Related Work}
Some of the relevant work that attempt to combine symbolic planning and RL include 
PEORL \cite{yang-et-al-ijcai2018}, SDRL \cite{lyu-et-al-aaai2019}, 
and Taskable RL \cite{illanes-et-al-icaps2020}. 
In Taskable RL, a manual mapping between 
the high-level actions in the planning task
and the options in the RL task must be provided,
and such a mapping could be many to one.
The termination set also requires manual modifications.
This makes it difficult to apply the method to
a new RL task or even a different problem instance in the same problem domain.
%
%
%
While PEORL also considers integrating symbolic planning and RL, 
it assumes that an exact representation of MDP is
available in the planning task and that there is one option per planning transition, 
which is unrealistic in many domains. 
It updates
the value function and the associated option policies only when options terminate, while
our method can operate online to accommodate intra-option updates. 
Moreover, PEORL is based on tabular representation, while our method is based on deep
neural network representation. 
SDRL, the deep learning extension of PEORL, still
learns both the lower-level and the high-level policies in restricted setting, 
whereas our method can use any RL algorithm for option policy learning.
A complementary to our work is the work is model learning \cite{jin2022creativity}, 
discovering the planning model that we assume to be given. 

Recently, there has been a series of work on using reward machines and linear temporal logic (LTL) for defining finite state automata encapsulating high-level symbolic information. The information is used to define either reward \cite{camacho2019ltl,icarte2022reward} or options \cite{icarte2018using}. The main difference of these methods from ours is that these methods encode abstract or partial {\em solutions}, and thus require someone to first solve the problem, at least on an abstract level. These solutions need to be updated once moving to solving even somewhat similar tasks. Further, while LTLs allow for capturing complex logical expressions, only a very restricted fragment is within reach of RL algorithms, even theoretically \cite{yangtractability}.

\section{Conclusions and Future Work}
In this work, we have presented a simple general framework for annotating
reinforcement learning tasks with planning tasks, to facilitate the transfer of
planning based techniques into the field of reinforcement learning.
Our framework links the state abstraction in AI planning and temporal abstraction in RL, providing a way to decompose the MDP into sub-MDPs by specifying options initiation set and termination condition based on planning operator definitions. 
We design a general method for injecting intrinsic rewards to RL agents from the abstract planning task by reformulating the underlying decomposed sub-MDPs with constraints visible to planning agents.
Learning only the (intra-option) policies for these sub-MDPs is shown to work well in practice on various problems, significantly improving sample efficiency.

This, however, is not the end of the road. While this work focused on temporal abstractions, our framework is more general, allowing to inject knowledge from the annotated planning task into the MDP. 
One example of such knowledge is goal distance estimates that can be used for reward shaping \cite{GehringPRL}. Another example is landmarks \cite{porteous-et-al-ecp2001} (logical formula that must occur on all plans), that can be used as sub-goals. Planning research has been focusing for years on automatically extracting knowledge from the planning task description. We believe that injecting this knowledge into the MDP can greatly improve the performance of RL agents.


\begin{thebibliography}{52}
\providecommand{\natexlab}[1]{#1}

\bibitem[{Abdulhai et~al.(2022)Abdulhai, Kim, Riemer, Liu, Tesauro, and
  How}]{cradol}
Abdulhai, M.; Kim, D.-K.; Riemer, M.; Liu, M.; Tesauro, G.; and How, J.~P.
  2022.
\newblock Context-Specific Representation Abstraction for Deep Option Learning.
\newblock In Honavar, V.; and Spaan, M., eds., \emph{Proceedings of the AAAI
  Conference on Artificial Intelligence}, 5959--5967. {AAAI} Press.

\bibitem[{Alkhazraji et~al.(2016)Alkhazraji, Frorath, Gr{\"u}tzner, Liebetraut,
  Ortlieb, Seipp, Springenberg, Stahl, W{\"u}lfing, Helmert
  et~al.}]{alkhazraji2016pyperplan}
Alkhazraji, Y.; Frorath, M.; Gr{\"u}tzner, M.; Liebetraut, T.; Ortlieb, M.;
  Seipp, J.; Springenberg, T.; Stahl, P.; W{\"u}lfing, J.; Helmert, M.; et~al.
  2016.
\newblock Pyperplan.

\bibitem[{Araki et~al.(2021)Araki, Li, Vodrahalli, DeCastro, Fry, and
  Rus}]{araki2021logical}
Araki, B.; Li, X.; Vodrahalli, K.; DeCastro, J.; Fry, M.~J.; and Rus, D. 2021.
\newblock The Logical Options Framework.
\newblock \emph{arXiv preprint arXiv:2102.12571}.

\bibitem[{B{\"a}ckstr{\"o}m and Nebel(1995)}]{backstrom-nebel-compint1995}
B{\"a}ckstr{\"o}m, C.; and Nebel, B. 1995.
\newblock Complexity Results for {SAS$^{+}$} Planning.
\newblock \emph{Computational Intelligence}, 11(4): 625--655.

\bibitem[{Bacon, Harb, and Precup(2017)}]{bacon-et-al-aaai2017}
Bacon, P.; Harb, J.; and Precup, D. 2017.
\newblock The Option-Critic Architecture.
\newblock In Singh, S.; and Markovitch, S., eds., \emph{Proceedings of the
  Thirty-First {AAAI} Conference on Artificial Intelligence ({AAAI} 2017)},
  1726--1734. {AAAI} Press.

\bibitem[{Bagaria and Konidaris(2019)}]{bagaria2019option}
Bagaria, A.; and Konidaris, G. 2019.
\newblock Option discovery using deep skill chaining.
\newblock In \emph{Proceedings of the Seventh International Conference on
  Learning Representations(ICLR 2019)}.

\bibitem[{Barto and Mahadevan(2003)}]{barto2003recent}
Barto, A.~G.; and Mahadevan, S. 2003.
\newblock Recent advances in hierarchical reinforcement learning.
\newblock 13(1): 41--77.

\bibitem[{Bertsekas(2018)}]{bertsekas2018abstract}
Bertsekas, D. 2018.
\newblock \emph{Abstract dynamic programming}.
\newblock Athena Scientific.

\bibitem[{Camacho et~al.(2019)Camacho, Icarte, Klassen, Valenzano, and
  McIlraith}]{camacho2019ltl}
Camacho, A.; Icarte, R.~T.; Klassen, T.~Q.; Valenzano, R.~A.; and McIlraith,
  S.~A. 2019.
\newblock LTL and Beyond: Formal Languages for Reward Function Specification in
  Reinforcement Learning.
\newblock In \emph{IJCAI}, volume~19, 6065--6073.

\bibitem[{Castro and Precup(2011)}]{castro2011automatic}
Castro, P.~S.; and Precup, D. 2011.
\newblock Automatic construction of temporally extended actions for mdps using
  bisimulation metrics.
\newblock In \emph{European Workshop on Reinforcement Learning}, 140--152.
  Springer.

\bibitem[{Chevalier-Boisvert et~al.(2019)Chevalier-Boisvert, Bahdanau, Lahlou,
  Willems, Saharia, Nguyen, and Bengio}]{babyai_iclr19}
Chevalier-Boisvert, M.; Bahdanau, D.; Lahlou, S.; Willems, L.; Saharia, C.;
  Nguyen, T.~H.; and Bengio, Y. 2019.
\newblock Baby{AI}: First Steps Towards Grounded Language Learning With a Human
  In the Loop.
\newblock In \emph{International Conference on Learning Representations}.

\bibitem[{Chevalier-Boisvert, Willems, and Pal(2018)}]{gym_minigrid}
Chevalier-Boisvert, M.; Willems, L.; and Pal, S. 2018.
\newblock Minimalistic Gridworld Environment for OpenAI Gym.
\newblock \url{https://github.com/maximecb/gym-minigrid}.

\bibitem[{Dayan and Hinton(1992)}]{dayan1992feudal}
Dayan, P.; and Hinton, G.~E. 1992.
\newblock Feudal Reinforcement Learning.
\newblock In \emph{Proceedings of the Fourth Annual Conference on Neural
  Information Processing Systems ({NIPS} 1992)}, 271–278.

\bibitem[{Dean and Lin(1995)}]{dean1995decomposition}
Dean, T.; and Lin, S.-H. 1995.
\newblock Decomposition techniques for planning in stochastic domains.
\newblock In \emph{IJCAI}, volume~2, 3.

\bibitem[{Den~Hengst et~al.(2022)Den~Hengst, Fran{\c{c}}ois-Lavet, Hoogendoorn,
  and van Harmelen}]{den2022reinforcement}
Den~Hengst, F.; Fran{\c{c}}ois-Lavet, V.; Hoogendoorn, M.; and van Harmelen, F.
  2022.
\newblock Reinforcement Learning with Option Machines.
\newblock In \emph{Proceedings of the Thirty-First International Joint
  Conference on Artificial Intelligence, IJCAI-22}, 2909--2915. International
  Joint Conferences on Artificial Intelligence Organization.

\bibitem[{Eppe, Nguyen, and Wermter(2019)}]{eppe2019semantics}
Eppe, M.; Nguyen, P.~D.; and Wermter, S. 2019.
\newblock From semantics to execution: Integrating action planning with
  reinforcement learning for robotic causal problem-solving.
\newblock \emph{Frontiers in Robotics and AI}, 6: 123.

\bibitem[{Garrett et~al.(2021)Garrett, Chitnis, Holladay, Kim, Silver,
  Kaelbling, and Lozano-P{\'e}rez}]{garrett2021integrated}
Garrett, C.~R.; Chitnis, R.; Holladay, R.; Kim, B.; Silver, T.; Kaelbling,
  L.~P.; and Lozano-P{\'e}rez, T. 2021.
\newblock Integrated task and motion planning.
\newblock 4: 265--293.

\bibitem[{Gehring et~al.(2021)Gehring, Asai, Chitnis, Silver, Kaelbling,
  Sohrabi, and Katz}]{GehringPRL}
Gehring, C.; Asai, M.; Chitnis, R.; Silver, T.; Kaelbling, L.~P.; Sohrabi, S.;
  and Katz, M. 2021.
\newblock Reinforcement Learning for Classical Planning: Viewing Heuristics as
  Dense Reward Generators.
\newblock \emph{Planning and Reinforcement Learning {PRL} Workshop at {ICAPS}}.

\bibitem[{Groshev et~al.(2018)Groshev, Tamar, Goldstein, Srivastava, and
  Abbeel}]{groshev2018learning}
Groshev, E.; Tamar, A.; Goldstein, M.; Srivastava, S.; and Abbeel, P. 2018.
\newblock Learning generalized reactive policies using deep neural networks.
\newblock In \emph{2018 AAAI Spring Symposium Series}.

\bibitem[{Icarte et~al.(2018)Icarte, Klassen, Valenzano, and
  McIlraith}]{icarte2018using}
Icarte, R.~T.; Klassen, T.; Valenzano, R.; and McIlraith, S. 2018.
\newblock Using reward machines for high-level task specification and
  decomposition in reinforcement learning.
\newblock In \emph{International Conference on Machine Learning}, 2107--2116.
  PMLR.

\bibitem[{Icarte et~al.(2022)Icarte, Klassen, Valenzano, and
  McIlraith}]{icarte2022reward}
Icarte, R.~T.; Klassen, T.~Q.; Valenzano, R.; and McIlraith, S.~A. 2022.
\newblock Reward machines: Exploiting reward function structure in
  reinforcement learning.
\newblock \emph{Journal of Artificial Intelligence Research}, 73: 173--208.

\bibitem[{Illanes et~al.(2020)Illanes, Yan, Icarte, and
  McIlraith}]{illanes-et-al-icaps2020}
Illanes, L.; Yan, X.; Icarte, R.~T.; and McIlraith, S.~A. 2020.
\newblock Symbolic Plans as High-Level Instructions for Reinforcement Learning.
\newblock In Beck, J.~C.; Karpas, E.; and Sohrabi, S., eds., \emph{Proceedings
  of the Thirtieth International Conference on Automated Planning and
  Scheduling (ICAPS 2020)}, 540--550. AAAI Press.

\bibitem[{Jin et~al.(2022)Jin, Ma, Jin, Zhuo, Chen, and Yu}]{jin2022creativity}
Jin, M.; Ma, Z.; Jin, K.; Zhuo, H.~H.; Chen, C.; and Yu, C. 2022.
\newblock Creativity of AI: Automatic Symbolic Option Discovery for
  Facilitating Deep Reinforcement Learning.
\newblock In \emph{Proceedings of the AAAI Conference on Artificial
  Intelligence}, volume~36, 7042--7050.

\bibitem[{Kokel et~al.(2021{\natexlab{a}})Kokel, Manoharan, Natarajan,
  Balaraman, and Tadepalli}]{KokelMNRT21}
Kokel, H.; Manoharan, A.; Natarajan, S.; Balaraman, R.; and Tadepalli, P.
  2021{\natexlab{a}}.
\newblock RePReL: Integrating Relational Planning and Reinforcement Learning
  for Effective Abstraction.
\newblock \emph{Proceedings of the International Conference on Automated
  Planning and Scheduling}, 31(1): 533--541.

\bibitem[{Kokel et~al.(2021{\natexlab{b}})Kokel, Manoharan, Natarajan,
  Ravindran, and Tadepalli}]{kokel2021deep}
Kokel, H.; Manoharan, A.; Natarajan, S.; Ravindran, B.; and Tadepalli, P.
  2021{\natexlab{b}}.
\newblock Deep RePReL--Combining Planning and Deep RL for acting in relational
  domains.
\newblock In \emph{Deep RL Workshop NeurIPS 2021}.

\bibitem[{Kolobov, Mausam, and Weld(2012)}]{kolobov-et-al-uai2012}
Kolobov, A.; Mausam; and Weld, D. 2012.
\newblock A Theory of Goal-oriented {MDP}s with Dead Ends.
\newblock In \emph{Proceedings of the 28th Conference on Uncertainty in
  Artificial Intelligence ({UAI} 2012)}, 438--447.

\bibitem[{Kulkarni et~al.(2016)Kulkarni, Narasimhan, Saeedi, and
  Tenenbaum}]{kulkarni2016hierarchical}
Kulkarni, T.~D.; Narasimhan, K.; Saeedi, A.; and Tenenbaum, J. 2016.
\newblock Hierarchical deep reinforcement learning: Integrating temporal
  abstraction and intrinsic motivation.
\newblock In \emph{Proceedings of the Thirty Annual Conference on Neural
  Information Processing Systems ({NIPS} 2016)}, 3675--3683.

\bibitem[{Li, Walsh, and Littman(2006)}]{li2006towards}
Li, L.; Walsh, T.~J.; and Littman, M.~L. 2006.
\newblock Towards a Unified Theory of State Abstraction for MDPs.
\newblock \emph{ISAIM}, 4: 5.

\bibitem[{Lyu et~al.(2019)Lyu, Yang, Liu, and Gustafson}]{lyu-et-al-aaai2019}
Lyu, D.; Yang, F.; Liu, B.; and Gustafson, S. 2019.
\newblock SDRL: Interpretable and Data-efficient Deep Reinforcement Learning
  Leveraging Symbolic Planning.
\newblock In \emph{Proceedings of the Thirty-Third {AAAI} Conference on
  Artificial Intelligence ({AAAI} 2019)}, 2970--2977. {AAAI} Press.

\bibitem[{Machado, Bellemare, and
  Bowling(2017)}]{machado-bellemare-bowling-icml2017}
Machado, M.~C.; Bellemare, M.~G.; and Bowling, M. 2017.
\newblock A Laplacian Framework for Option Discovery in Reinforcement Learning.
\newblock In \emph{Proceedings of the Thirty-Fourth International Conference on
  Machine Learning (ICML 2017)}, 2295--2304.

\bibitem[{Mausam and Kolobov(2012)}]{mausam-kolobov-2012}
Mausam; and Kolobov, A. 2012.
\newblock \emph{Planning with Markov Decision Processes: An {AI} Perspective}.
\newblock Synthesis Lectures on Artificial Intelligence and Machine Learning.
  Morgan {\&} Claypool Publishers.

\bibitem[{McGovern and Barto(2001)}]{mcgovern2001automatic}
McGovern, A.; and Barto, A.~G. 2001.
\newblock Automatic discovery of subgoals in reinforcement learning using
  diverse density.
\newblock \emph{Computer Science Department Faculty Publication Series}.

\bibitem[{Mnih et~al.(2015)Mnih, Kavukcuoglu, Silver, Rusu, Veness, Bellemare,
  Graves, Riedmiller, Fidjeland, Ostrovski et~al.}]{mnih2015human}
Mnih, V.; Kavukcuoglu, K.; Silver, D.; Rusu, A.~A.; Veness, J.; Bellemare,
  M.~G.; Graves, A.; Riedmiller, M.; Fidjeland, A.~K.; Ostrovski, G.; et~al.
  2015.
\newblock Human-level control through deep reinforcement learning.
\newblock \emph{nature}, 518(7540): 529--533.

\bibitem[{Nachum et~al.(2018)Nachum, Gu, Lee, and Levine}]{nachum2018data}
Nachum, O.; Gu, S.~S.; Lee, H.; and Levine, S. 2018.
\newblock Data-Efficient Hierarchical Reinforcement Learning.
\newblock \emph{Advances in Neural Information Processing Systems}, 31:
  3303--3313.

\bibitem[{Parr and Russell(1998)}]{parr1998reinforcement}
Parr, R.; and Russell, S. 1998.
\newblock Reinforcement learning with hierarchies of machines.
\newblock In \emph{Proceedings of the Twelfth Annual Conference on Neural
  Information Processing Systems ({NIPS} 1998)}, 1043--1049.

\bibitem[{Porteous, Sebastia, and Hoffmann(2001)}]{porteous-et-al-ecp2001}
Porteous, J.; Sebastia, L.; and Hoffmann, J. 2001.
\newblock On the Extraction, Ordering, and Usage of Landmarks in Planning.
\newblock In Cesta, A.; and Borrajo, D., eds., \emph{Proceedings of the Sixth
  European Conference on Planning ({ECP} 2001)}, 174--182. AAAI Press.

\bibitem[{Raffin et~al.(2021)Raffin, Hill, Gleave, Kanervisto, Ernestus, and
  Dormann}]{stable-baselines3}
Raffin, A.; Hill, A.; Gleave, A.; Kanervisto, A.; Ernestus, M.; and Dormann, N.
  2021.
\newblock Stable-Baselines3: Reliable Reinforcement Learning Implementations.
\newblock \emph{Journal of Machine Learning Research}, 22(268): 1--8.

\bibitem[{Ravindran and Barto(2004)}]{RavindranApproximateH}
Ravindran, B.; and Barto, A. 2004.
\newblock Approximate Homomorphisms : A framework for non-exact minimization in
  Markov Decision Processes.
\newblock In \emph{Proceedings of the International Conference on Knowledge
  Based Computer Systems (KBCS 2004)}.

\bibitem[{Schulman et~al.(2017)Schulman, Wolski, Dhariwal, Radford, and
  Klimov}]{PPO}
Schulman, J.; Wolski, F.; Dhariwal, P.; Radford, A.; and Klimov, O. 2017.
\newblock Proximal Policy Optimization Algorithms.
\newblock \emph{CoRR}, abs/1707.06347.

\bibitem[{Shen, Trevizan, and Thi{\'e}baux(2020)}]{shen-et-al-icaps2020}
Shen, W.; Trevizan, F.; and Thi{\'e}baux, S. 2020.
\newblock Learning Domain-Independent Planning Heuristics with Hypergraph
  Networks.
\newblock In Beck, J.~C.; Karpas, E.; and Sohrabi, S., eds., \emph{Proceedings
  of the Thirtieth International Conference on Automated Planning and
  Scheduling (ICAPS 2020)}, 574--584. AAAI Press.

\bibitem[{Simsek and Barreto(2008)}]{simsek2008skill}
Simsek, O.; and Barreto, A.~S. 2008.
\newblock Skill characterization based on betweenness.
\newblock In \emph{Proceedings of the Twenty-second Annual Conference on Neural
  Information Processing Systems ({NIPS} 2008)}, 1497--1504.

\bibitem[{Singh, Barto, and Chentanez(2004)}]{singh2005intrinsically}
Singh, S.; Barto, A.~G.; and Chentanez, N. 2004.
\newblock Intrinsically motivated reinforcement learning.
\newblock In \emph{Proceedings of the Eighteenth Annual Conference on Neural
  Information Processing Systems ({NIPS} 2004)}, 1281--1288.

\bibitem[{Stolle and Precup(2002)}]{stolle-precup-isara2002}
Stolle, M.; and Precup, D. 2002.
\newblock Learning Options in Reinforcement Learning.
\newblock In Koenig, S.; and Holte, R.~C., eds., \emph{Proceedings of the 5th
  International Symposium on Abstraction, Reformulation and Approximation (SARA
  2002)}, volume 2371 of \emph{Lecture Notes in Artificial Intelligence},
  212--223. Springer-Verlag.

\bibitem[{Sutton and Barto(1998)}]{sutton-barto-1998}
Sutton, R.~S.; and Barto, A.~G. 1998.
\newblock \emph{Reinforcement Learning: An Introduction}.
\newblock Cambridge, MA, USA: MIT Press.

\bibitem[{Sutton, Precup, and Singh(1999)}]{sutton-et-al-aij1999}
Sutton, R.~S.; Precup, D.; and Singh, S.~P. 1999.
\newblock Between MDPs and Semi-MDPs: {A} Framework for Temporal Abstraction in
  Reinforcement Learning.
\newblock \emph{Artificial Intelligence}, 112(1-2): 181--211.

\bibitem[{Toyer et~al.(2018)Toyer, Trevizan, Thi{\'{e}}baux, and
  Xie}]{toyer-et-al-aaai2018}
Toyer, S.; Trevizan, F.; Thi{\'{e}}baux, S.; and Xie, L. 2018.
\newblock Action Schema Networks: Generalised Policies with Deep Learning.
\newblock In \emph{Proceedings of the Thirty-Second {AAAI} Conference on
  Artificial Intelligence ({AAAI} 2018)}, 6294--6301. {AAAI} Press.

\bibitem[{Van~Hasselt, Guez, and Silver(2016)}]{van2016deep}
Van~Hasselt, H.; Guez, A.; and Silver, D. 2016.
\newblock Deep reinforcement learning with double q-learning.
\newblock In \emph{Proceedings of the AAAI conference on artificial
  intelligence}, volume~30.

\bibitem[{Vezhnevets et~al.(2017)Vezhnevets, Osindero, Schaul, Heess,
  Jaderberg, Silver, and Kavukcuoglu}]{vezhnevets2017feudal}
Vezhnevets, A.~S.; Osindero, S.; Schaul, T.; Heess, N.; Jaderberg, M.; Silver,
  D.; and Kavukcuoglu, K. 2017.
\newblock Feudal networks for hierarchical reinforcement learning.
\newblock In \emph{Proceedings of the Thirty-Fourth International Conference on
  Machine Learning (ICML 2017)}, 3540--3549.

\bibitem[{Watkins and Dayan(1992)}]{watkins-dayan-ml1992}
Watkins, C.~J.; and Dayan, P. 1992.
\newblock Q-learning.
\newblock \emph{Machine learning}, 8(3--4): 279--292.

\bibitem[{Wen et~al.(2020)Wen, Precup, Ibrahimi, Barreto, Van, and
  Singh}]{wen-et-al-nips2020}
Wen, Z.; Precup, D.; Ibrahimi, M.; Barreto, A.; Van, B.~R.; and Singh, S. 2020.
\newblock On Efficiency in Hierarchical Reinforcement Learning.
\newblock In \emph{Proceedings of the Thirty-fourth Annual Conference on Neural
  Information Processing Systems ({NeurIPS} 2020)}, 6708--6718.

\bibitem[{Yang, Littman, and Carbin(2022)}]{yangtractability}
Yang, C.; Littman, M.~L.; and Carbin, M. 2022.
\newblock On the (In) Tractability of Reinforcement Learning for LTL
  Objectives.
\newblock In \emph{Proceedings of the Thirty-First International Joint
  Conference on Artificial Intelligence (IJCAI)}.

\bibitem[{Yang et~al.(2018)Yang, Lyu, Liu, and
  Gustafson}]{yang-et-al-ijcai2018}
Yang, F.; Lyu, D.; Liu, B.; and Gustafson, S. 2018.
\newblock PEORL: Integrating Symbolic Planning and Hierarchical Reinforcment
  Learning for Robust Decision-Making.
\newblock In Lang, J., ed., \emph{Proceedings of the 27th International Joint
  Conference on Artificial Intelligence (IJCAI 2018)}, 4860--4866. IJCAI.

\end{thebibliography}

\onecolumn
\clearpage

\appendix
\setcounter{secnumdepth}{2}

\section{Planning Annotations}
This section summarizes the planning annotations for
\texttt{MiniGrid} and \texttt{N Rooms} domains
that we evaluated in the experiment section.

\subsection{\texttt{MiniGrid}}
\label{app:mgdomain}
The RL environment maintains 
rooms over an NxN grid, 
blue balls, 
a green goal tile, 
the agent location and orientation, and
doors with states, open, closed, locked, and unlocked.
In planning tasks,
we abstract away information relevant to each cell 
in the grid.
Namely, 
the exact location and orientation of the agent,
the exact location of the key,
blue balls, and a green goal tile
are all ignored.
In addition, 
the states of a door
is simplified to two states, locked or unlocked.

On resetting the RL environment,
we implemented gym environments 
such that
objects that are only visible to RL environments are randomized,
as usual in the standard \texttt{MiniGrid} gym environment.
However, we restricted the information 
relevant to the planning task remains the same.
For example, the agent's initial location will be randomized 
within a predefined room (the room at the upper left corner),
and the goal location will also be randomized within a room 
at the lower right corner.
A key will appear in the same room,
and the initial state of the door, whether it is locked or unlocked,
will remain the same.
This choice doesn't limit algorithms but 
it simplifies the experiment to start with 
a single PDDL instance to annotate the environment,
although the agent will generate additional PDDL instances
when it solves planning tasks 
with a new initial planning state
while selecting options online.
\subsubsection{PDDL domain}
PDDL domain file
was manually generated by modifying existing similar PDDL domains.
\begin{verbatim}
(define (domain MazeRooms)
  (:requirements :strips :typing)
    (:types
        room - object
        key - object
        door - object
    )
    (:predicates
        (at-agent ?r - room)                        ; Agent current location 
        (at ?k - key ?r - room)                     ; Key location
        (carry ?k - key)                            ; Does agent carry the key
        (empty-hand)                                ; Is agent hand empty
        (unlocked ?d - door)                        ; Is door unlocked
        (locked ?d - door)                          ; Is door locked (for STRIPS only)
        (KEYMATCH ?k - key ?d - door)               ; Does the key match the door
        (LINK ?d - door ?r1 - room ?r2 - room)      ; Two rooms linked via the door
    )

    (:action move-room
        :parameters (?d - door ?r1 - room ?r2 - room)
        :precondition (and
            (LINK ?d ?r1 ?r2)
            (at-agent ?r1)
            (unlocked ?d)
        )
        :effect (and
            (not (at-agent ?r1))
            (at-agent ?r2)
        )
    )

    (:action pickup
        :parameters (?k - key ?r - room)
        :precondition (and
            (at ?k ?r)
            (at-agent ?r)
            (empty-hand)
        )
        :effect (and
            (not (at ?k ?r))
            (not (empty-hand))
            (carry ?k)
        )
    )

    (:action drop
        :parameters (?k - key ?r - room)
        :precondition (and
            (carry ?k)
            (at-agent ?r)
        )
        :effect (and
            (at ?k ?r)
            (empty-hand)
            (not (carry ?k))
        )
    )

    (:action unlock
        :parameters (?k - key ?d - door ?r1 - room ?r2 - room)
        :precondition (and
            (LINK ?d ?r1 ?r2)
            (KEYMATCH ?k ?d)
            (at-agent ?r1)
            (carry ?k)
            (locked ?d)
        )
        :effect (and
            (not (locked ?d))
            (unlocked ?d)
        )
    )

    (:action lock
        :parameters (?k - key ?d - door ?r1 - room ?r2 - room)
        :precondition (and
            (LINK ?d ?r1 ?r2)
            (KEYMATCH ?k ?d)
            (at-agent ?r1)
            (carry ?k)
            (unlocked ?d)
        )
        :effect (and
            (locked ?d)
            (not (unlocked ?d))
        )
    )
)
\end{verbatim}

\subsubsection{PDDL instance}
All PDDL problem instances were generated by our benchmark script code
by processing internal state information available in \texttt{MiniGrid}
gym environments.
\begin{figure}[h!]
    \centering
    \includegraphics[width=0.25\textwidth]{figures/domains/DoorKey.png}
    \caption{DoorKey}
    \label{fig:app_fig1}
\end{figure}
\begin{verbatim}
(define (problem MazeRooms-8by8-DoorKey)
    (:domain MazeRooms)
    (:objects
        R-0-0 R-1-0 -  room
        K-yellow-0 - key
        D-yellow-0-0-1-0 - door
    )
    (:init
        (LINK D-yellow-0-0-1-0 R-0-0 R-1-0)
        (LINK D-yellow-0-0-1-0 R-1-0 R-0-0)
        (KEYMATCH K-yellow-0 D-yellow-0-0-1-0)
        (at-agent R-0-0)
        (at K-yellow-0 R-0-0)
        (locked D-yellow-0-0-1-0)
        (empty-hand)
    )
    (:goal (and
        (at-agent R-1-0))
    )
)
\end{verbatim}

\begin{figure}[h!]
    \centering
    \includegraphics[width=0.25\textwidth]{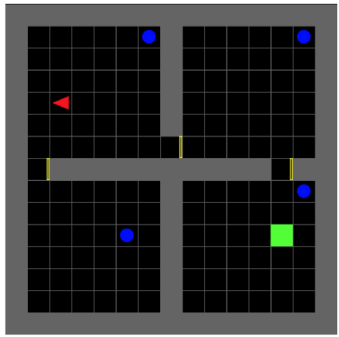}
    \caption{4 Rooms with Balls}
    \label{fig:app_fig2}
\end{figure}

\begin{verbatim}
(define (problem MazeRooms-2by2-Balls)
    (:domain MazeRooms)
    (:objects
        R-0-0 R-0-1 R-1-0 R-1-1 -  room
        D-yellow-0-0-1-0 D-yellow-1-0-1-1 D-yellow-0-0-0-1 - door
    )
    (:init
        (LINK D-yellow-0-0-0-1 R-0-0 R-0-1)
        (LINK D-yellow-0-0-0-1 R-0-1 R-0-0)
        (LINK D-yellow-0-0-1-0 R-0-0 R-1-0)
        (LINK D-yellow-0-0-1-0 R-1-0 R-0-0)
        (LINK D-yellow-1-0-1-1 R-1-0 R-1-1)
        (LINK D-yellow-1-0-1-1 R-1-1 R-1-0)
        (at-agent R-0-0)
        (unlocked D-yellow-0-0-0-1)
        (unlocked D-yellow-0-0-1-0)
        (unlocked D-yellow-1-0-1-1)
        (empty-hand)
    )
    (:goal (and
        (at-agent R-1-1))
    )
)
\end{verbatim}

\begin{figure}[h!]
    \centering
    \includegraphics[width=0.25\textwidth]{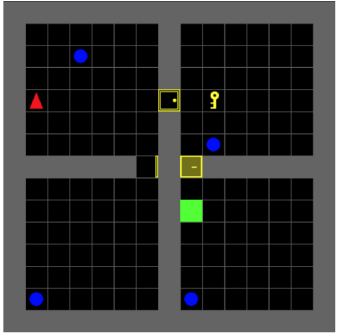}
    \caption{4 Rooms with a Locked Door}
    \label{fig:app_fig3}
\end{figure}

\begin{verbatim} 
(define (problem MazeRooms-2by2-Locked)
    (:domain MazeRooms)
    (:objects
        R-0-0 R-0-1 R-1-0 R-1-1 -  room
        K-yellow-0 - key
        D-yellow-0-0-1-0 D-yellow-1-0-1-1 D-yellow-0-0-0-1 - door
    )
    (:init
        (LINK D-yellow-0-0-0-1 R-0-0 R-0-1)
        (LINK D-yellow-0-0-0-1 R-0-1 R-0-0)
        (LINK D-yellow-0-0-1-0 R-0-0 R-1-0)
        (LINK D-yellow-0-0-1-0 R-1-0 R-0-0)
        (LINK D-yellow-1-0-1-1 R-1-0 R-1-1)
        (LINK D-yellow-1-0-1-1 R-1-1 R-1-0)
        (KEYMATCH K-yellow-0 D-yellow-0-0-0-1)
        (KEYMATCH K-yellow-0 D-yellow-0-0-1-0)
        (KEYMATCH K-yellow-0 D-yellow-1-0-1-1)
        (at-agent R-0-0)
        (at K-yellow-0 R-1-0)
        (locked D-yellow-1-0-1-1)
        (unlocked D-yellow-0-0-0-1)
        (unlocked D-yellow-0-0-1-0)
        (empty-hand)
    )
    (:goal (and
        (at-agent R-1-1))
    )
)
\end{verbatim}

\begin{figure}[h!]
    \centering
    \includegraphics[width=0.25\textwidth]{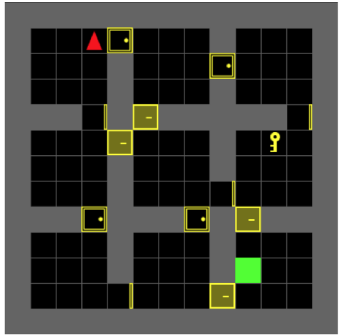}
    \caption{9 Rooms with Locked Doors}
    \label{fig:app_fig4}
\end{figure}

\begin{verbatim} 
(define (problem MazeRooms-3by3-LockedDoors)
    (:domain MazeRooms)
    (:objects
        R-0-0 R-0-1 R-0-2 R-1-0 R-1-1 R-1-2 R-2-0 R-2-1 R-2-2 -  room
        K-yellow-0 - key
        D-yellow-0-1-0-2 D-yellow-1-1-2-1 D-yellow-2-0-2-1 
        D-yellow-2-1-2-2 D-yellow-0-2-1-2 D-yellow-1-0-2-0 
        D-yellow-1-1-1-2 D-yellow-0-0-1-0 D-yellow-0-0-0-1 
        D-yellow-1-0-1-1 D-yellow-0-1-1-1 D-yellow-1-2-2-2 - door
    )
    (:init
        (LINK D-yellow-0-0-0-1 R-0-0 R-0-1)
        (LINK D-yellow-0-0-0-1 R-0-1 R-0-0)
        (LINK D-yellow-0-0-1-0 R-0-0 R-1-0)
        (LINK D-yellow-0-0-1-0 R-1-0 R-0-0)
        (LINK D-yellow-0-1-0-2 R-0-1 R-0-2)
        (LINK D-yellow-0-1-0-2 R-0-2 R-0-1)
        (LINK D-yellow-0-1-1-1 R-0-1 R-1-1)
        (LINK D-yellow-0-1-1-1 R-1-1 R-0-1)
        (LINK D-yellow-0-2-1-2 R-0-2 R-1-2)
        (LINK D-yellow-0-2-1-2 R-1-2 R-0-2)
        (LINK D-yellow-1-0-1-1 R-1-0 R-1-1)
        (LINK D-yellow-1-0-1-1 R-1-1 R-1-0)
        (LINK D-yellow-1-0-2-0 R-1-0 R-2-0)
        (LINK D-yellow-1-0-2-0 R-2-0 R-1-0)
        (LINK D-yellow-1-1-1-2 R-1-1 R-1-2)
        (LINK D-yellow-1-1-1-2 R-1-2 R-1-1)
        (LINK D-yellow-1-1-2-1 R-1-1 R-2-1)
        (LINK D-yellow-1-1-2-1 R-2-1 R-1-1)
        (LINK D-yellow-1-2-2-2 R-1-2 R-2-2)
        (LINK D-yellow-1-2-2-2 R-2-2 R-1-2)
        (LINK D-yellow-2-0-2-1 R-2-0 R-2-1)
        (LINK D-yellow-2-0-2-1 R-2-1 R-2-0)
        (LINK D-yellow-2-1-2-2 R-2-1 R-2-2)
        (LINK D-yellow-2-1-2-2 R-2-2 R-2-1)
        (KEYMATCH K-yellow-0 D-yellow-0-0-0-1)
        (KEYMATCH K-yellow-0 D-yellow-0-0-1-0)
        (KEYMATCH K-yellow-0 D-yellow-0-1-0-2)
        (KEYMATCH K-yellow-0 D-yellow-0-1-1-1)
        (KEYMATCH K-yellow-0 D-yellow-0-2-1-2)
        (KEYMATCH K-yellow-0 D-yellow-1-0-1-1)
        (KEYMATCH K-yellow-0 D-yellow-1-0-2-0)
        (KEYMATCH K-yellow-0 D-yellow-1-1-1-2)
        (KEYMATCH K-yellow-0 D-yellow-1-1-2-1)
        (KEYMATCH K-yellow-0 D-yellow-1-2-2-2)
        (KEYMATCH K-yellow-0 D-yellow-2-0-2-1)
        (KEYMATCH K-yellow-0 D-yellow-2-1-2-2)
        (at-agent R-0-0)
        (at K-yellow-0 R-2-1)
        (locked D-yellow-0-1-1-1)
        (locked D-yellow-1-0-1-1)
        (locked D-yellow-1-2-2-2)
        (locked D-yellow-2-1-2-2)
        (unlocked D-yellow-0-0-0-1)
        (unlocked D-yellow-0-0-1-0)
        (unlocked D-yellow-0-1-0-2)
        (unlocked D-yellow-0-2-1-2)
        (unlocked D-yellow-1-0-2-0)
        (unlocked D-yellow-1-1-1-2)
        (unlocked D-yellow-1-1-2-1)
        (unlocked D-yellow-2-0-2-1)
        (empty-hand)
    )
    (:goal (and
        (at-agent R-2-2))
    )
)
\end{verbatim}

\subsection{\texttt{N-rooms}}
\label{app:nrooms}

The RL environment maintains an
NxN grid with 
the location of rooms, hallways, and walls.
Therefore, a planning state 
obtained from an RL state through the state mapping function
is the name of each room associated with the location.

\subsubsection{PDDL domain}
The PDDL domain file used for the \texttt{N-rooms} problem is described in what follows.
\begin{verbatim}
(define (domain rooms)
  (:requirements :strips :typing)
    (:types
        room - object
    )
    (:predicates 	
        (in-room ?r - room)                         ; Agent current location
        (CONNECTED-ROOMS ?r - room ?s - room)       ; Two rooms are connected
    )
    (:action move-room
        :parameters (?r - room ?s - room)
        :precondition (and
            (CONNECTED-ROOMS ?r ?s)
            (in-room ?r)
        )
        :effect (and
            (not (in-room ?r))
            (in-room ?s)
        )
    )
)
\end{verbatim}

\subsubsection{PDDL instance}
Unlike \texttt{MiniGrid}-based environments,
we randomize both initial and goal locations
on resetting the gym environment.
Therefore, we show one of the auto-generated planning instances
associated with a pair of initial and goal rooms.

\begin{verbatim}
(define (problem rooms-1-16-12__1-2)
    (:domain rooms)
    (:objects
        W c-r0-r2 c-r10-r3 c-r11-r5 c-r2-r1 c-r2-r9 c-r3-r7 
        c-r4-r8 c-r4-r9 c-r6-r3 c-r8-r5 c-r9-r10 
        r0 r1 r10 r11 r2 r3 r4 r5 r6 r7 r8 r9 - room
    )
    (:init
        (CONNECTED-ROOMS c-r0-r2 r0)
        (CONNECTED-ROOMS c-r0-r2 r2)
        (CONNECTED-ROOMS c-r10-r3 r10)
        (CONNECTED-ROOMS c-r10-r3 r3)
        (CONNECTED-ROOMS c-r11-r5 r11)
        (CONNECTED-ROOMS c-r11-r5 r5)
        (CONNECTED-ROOMS c-r2-r1 r1)
        (CONNECTED-ROOMS c-r2-r1 r2)
        (CONNECTED-ROOMS c-r2-r9 r2)
        (CONNECTED-ROOMS c-r2-r9 r9)
        (CONNECTED-ROOMS c-r3-r7 r3)
        (CONNECTED-ROOMS c-r3-r7 r7)
        (CONNECTED-ROOMS c-r4-r8 r4)
        (CONNECTED-ROOMS c-r4-r8 r8)
        (CONNECTED-ROOMS c-r4-r9 r4)
        (CONNECTED-ROOMS c-r4-r9 r9)
        (CONNECTED-ROOMS c-r6-r3 r3)
        (CONNECTED-ROOMS c-r6-r3 r6)
        (CONNECTED-ROOMS c-r8-r5 r5)
        (CONNECTED-ROOMS c-r8-r5 r8)
        (CONNECTED-ROOMS c-r9-r10 r10)
        (CONNECTED-ROOMS c-r9-r10 r9)
        (CONNECTED-ROOMS r0 c-r0-r2)
        (CONNECTED-ROOMS r1 c-r2-r1)
        (CONNECTED-ROOMS r10 c-r10-r3)
        (CONNECTED-ROOMS r10 c-r9-r10)
        (CONNECTED-ROOMS r11 c-r11-r5)
        (CONNECTED-ROOMS r2 c-r0-r2)
        (CONNECTED-ROOMS r2 c-r2-r1)
        (CONNECTED-ROOMS r2 c-r2-r9)
        (CONNECTED-ROOMS r3 c-r10-r3)
        (CONNECTED-ROOMS r3 c-r3-r7)
        (CONNECTED-ROOMS r3 c-r6-r3)
        (CONNECTED-ROOMS r4 c-r4-r8)
        (CONNECTED-ROOMS r4 c-r4-r9)
        (CONNECTED-ROOMS r5 c-r11-r5)
        (CONNECTED-ROOMS r5 c-r8-r5)
        (CONNECTED-ROOMS r6 c-r6-r3)
        (CONNECTED-ROOMS r7 c-r3-r7)
        (CONNECTED-ROOMS r8 c-r4-r8)
        (CONNECTED-ROOMS r8 c-r8-r5)
        (CONNECTED-ROOMS r9 c-r2-r9)
        (CONNECTED-ROOMS r9 c-r4-r9)
        (CONNECTED-ROOMS r9 c-r9-r10)
        (in-room r6)
    )
    (:goal (and
        (in-room r0))
    )
)
\end{verbatim}

\section{Hierarchical Reward Machines}
Hierarchical Reward Machine (HRM)
needs a Finite State Machine (FSM) that
describes the transitions between
symbolic states and events that trigger the transitions.
It can either be written directly or
translated from Linear Temporal Logic (LTL) expressions.
In this paper, 
we defined FSMs
for $\texttt{MiniGrid}$ environments
following the structures
that commonly appear in papers 
based on reward machines.

\textbf{Note that FSMs in 
HRL algorithms based on
LTL/RM encodes knowledge about the solution to the problem.
FSMs must be defined per instance basis,
or a human expert must know a partial solution that 
is general enough so that 
it can be applied to multiple instances.
As the problem domain gets more complicated, 
this manual task is not at all trivial.
}

In this paper, 
we chose hierarchical reward machines (HRM) \cite{icarte2022reward}
as a baseline HRL algorithm
since it is very difficult to find 
a reliable implementation that integrates 
deep RL agents.
While extending the baseline for solving \texttt{MiniGrid} environments,
we defined FSMs similar to the ones in the
baseline method \cite{icarte2022reward}.

\subsection{\texttt{MiniGrid - DoorKey}}
\begin{figure}[h!]
    \centering
    \includegraphics[width=0.7\textwidth]{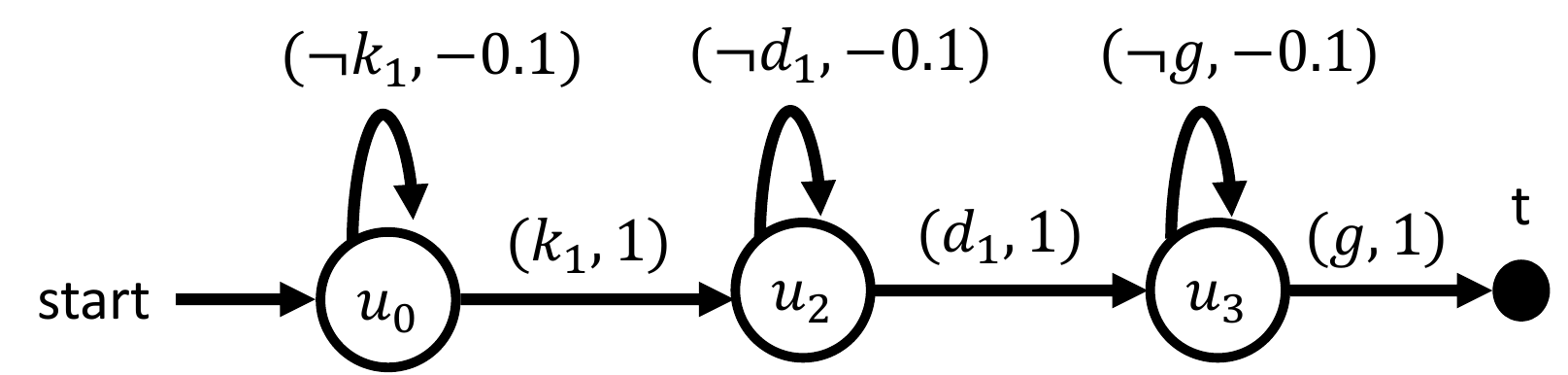}
    \caption{DoorKey}
    \label{fig:app_fig5}
\end{figure}

Nodes $u_0$, $u_1$, and $u_2$ are FSM states.
Upon resetting the RL environment, 
FSM enters the first node $u_0$,
and events defined over the edges
trigger the state transitions.
This reward structure can be used for
defining rewards for the RL environment in a reward machine,
or 
one could define options over FSMs 
that encapsulates temporarily extended actions.
The events are defined as follows:
$k_1$ entails true if the agent picked up the key
at the room,
$d_1$ entails true if the agent was able to
unlock the door connecting two rooms,
$g$ entails true if the agent arrived at the goal room.
Finally, the FSM terminates when 
the agent arrives at the goal tile.
The value next to the event is
the reward that the agent receives.
For example, when the agent was in state $u_0$
and did not pick up the key, then the reward is $-0.1$.
On the other hand, if the agent picked up the key, then
the state transition occurs, and the agent receives a reward of $1$.

\subsection{\texttt{MiniGrid - 4 Rooms with Balls}}
\begin{figure}[h!]
    \centering
    \includegraphics[width=0.7\textwidth]{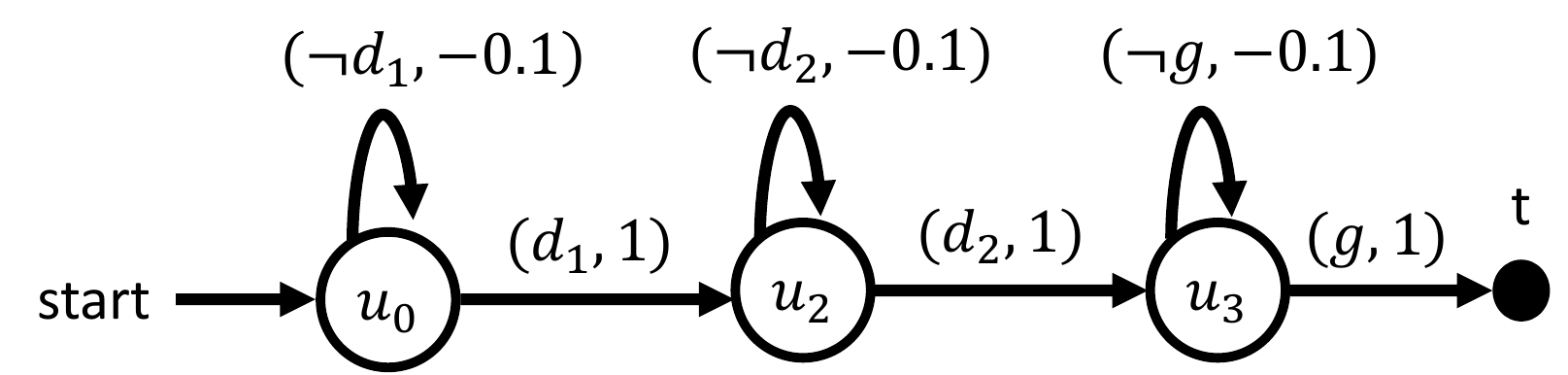}
    \caption{4 Rooms with Balls}
    \label{fig:app_fig6}
\end{figure}
The underlying idea for defining this FSM
is the same for the FSM shown in Figure \ref{fig:app_fig5}.
In this problem domain, there are 3 doors
that connect rooms.
$d_1$ entails true if the agent unlocked 
the room between the upper left and the upper right rooms,
$d_1$ entails true if the agent unlocked
the room between the upper right and the lower right rooms.

\subsection{\texttt{MiniGrid - 4 Rooms with a Locked Door}}
\begin{figure}[h!]
    \centering
    \includegraphics[width=0.7\textwidth]{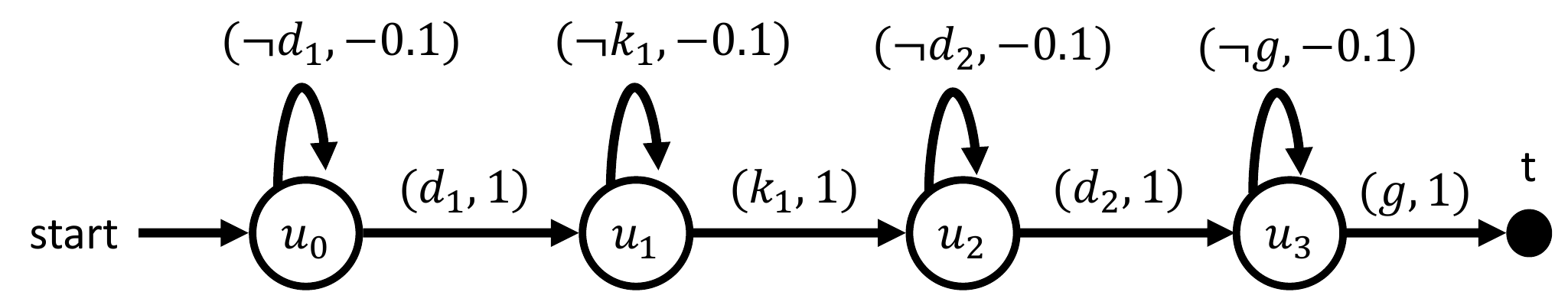}
    \caption{4 Rooms with a Locked Door}
    \label{fig:app_fig7}
\end{figure}
In this problem domain, the agent 
musts use a key to unlocked the goal room.
Therefore, Figure \ref{fig:app_fig7}
encodes such knowledge in the FSM;
from state $u_1$, 
the agent can transit to the next state
if agent picked up the key in the room at the upper right corner ($k_1$).
As we can see, as the solution to the problem becomes
more complex, 
the FSMs has to incorporate such knowledge 
in more complex diagrams, one per each domain.
\textbf{It is worth noting that in order to incorporate the knowledge about solutions in the FSM, one needs first to obtain such knowledge. While for small problems humans can easily spot what a solution is, as problems become more complex, it becomes harder.}

\newpage
\section{Implementation Notes}
In this section, we provide implementation details for
\texttt{HplanPPO}, \texttt{HplanDDQN}, and \texttt{HRM}.
For more additional details,
please refer to the python script code available
in the code supplementary material.

\subsection{Feature Extractors}
For the problem domains generated by \texttt{MiniGrid} environment,
we modified Convolutional Neural Network (CNN) based architecture
presented in \texttt{BabyAI} RL environment \cite{babyai_iclr19}.
The main differences between \texttt{BabyAI} and our \texttt{MiniGrid}-based
gym environments are: 
(1) our experiments are fully-observable,
(2) there's no natural language goal description available in our experiments.

\subsubsection{CNN Feature Extractors for 4 and 9 Rooms Environments}
The CNN feature extractors
first process three-channel input grid into 
the embedding layer since the value at each
grid encodes symbolic state information in integers.
Next, we pass 3-layer CNN, and finally 
we added the last linear layer to the output
the feature vector of size 128.

\begin{verbatim}
class BabyAIFullyObsCNN(BaseFeaturesExtractor):
    def __init__(
        self,
        observation_space: gym.Space,
        features_dim: int = 128,
    ):
        super().__init__(observation_space, features_dim)
        self.max_value = 147
        self.embedding = nn.Embedding(3 * self.max_value, features_dim)
        self.cnn = nn.Sequential(
            nn.Conv2d(in_channels=features_dim, out_channels=features_dim, 
            kernel_size=(3, 3), stride=(2, 2), padding=1),
            nn.BatchNorm2d(features_dim),
            nn.ReLU(),
            nn.Conv2d(in_channels=features_dim, out_channels=features_dim, 
            kernel_size=(3, 3), stride=(2, 2), padding=1),
            nn.BatchNorm2d(features_dim),
            nn.ReLU(),
            nn.Conv2d(in_channels=features_dim, out_channels=features_dim, 
            kernel_size=(3, 3), stride=(2, 2), padding=1),
            nn.BatchNorm2d(features_dim),
            nn.ReLU(),
            nn.MaxPool2d(kernel_size=(2, 2), stride=(2, 2), padding=0),
            nn.Flatten()
        )
        self.linear = nn.Sequential(
            nn.Linear(n_flatten, features_dim),
            nn.ReLU()
        )
        self.apply(initialize_parameters)

    def forward(self, observations: th.Tensor):        
        offsets = th.Tensor([0, self.max_value, 2 * self.max_value])
        x = (observations + offsets[None, :, None, None]).long()
        x =  self.embedding(x).sum(1).permute(0, 3, 1, 2)
        x = self.cnn(x)
        x = self.linear(x)
        return x
\end{verbatim}

\subsubsection{CNN Feature Extractors for Door Key environment}
The architecture remains the same as above
except for the CNN only has two layers 
when the input dimension becomes smaller.
In addition to processing
the feature values in the grid,
the following code snippet also shows
the option labels will also be concatenated 
with the feature vector after passing 
an embedding layer and one additional linear layer.
These option label features are necessary 
for implementing \texttt{DDQN}-based algorithms.

\begin{verbatim}
class BabyAIFullyObsSmallCNNDict(BaseFeaturesExtractor):
    def __init__(
            self,
            observation_space: gym.Space,
            features_dim: int = 128,
    ):
        super().__init__(observation_space, features_dim)
        image_observation_space = observation_space.spaces['image']

        self.max_value = 147
        self.embedding = nn.Embedding(3 * self.max_value, features_dim)
        self.cnn = nn.Sequential(
            nn.Conv2d(in_channels=features_dim, out_channels=features_dim, 
            kernel_size=(3, 3), stride=(2, 2), padding=1),
            nn.BatchNorm2d(features_dim),
            nn.ReLU(),
            nn.Conv2d(in_channels=features_dim, out_channels=features_dim, 
            kernel_size=(3, 3), stride=(2, 2), padding=1),
            nn.BatchNorm2d(features_dim),
            nn.ReLU(),
            nn.MaxPool2d(kernel_size=(2, 2), stride=(2, 2), padding=0),
            nn.Flatten()
        )
        self.linear = nn.Sequential(
            nn.Linear(n_flatten, features_dim),
            nn.ReLU()
        )
        label_observation_space = observation_space.spaces['label']
        self.label_embedding = nn.Linear(label_observation_space.n, features_dim)
        self.linear2 = nn.Sequential(
            nn.Linear(features_dim * 2, features_dim),
            nn.ReLU()
        )
        self.apply(initialize_parameters)  # Initialize parameters correctly

    def forward(self, observations: th.Tensor) -> th.Tensor:
        x = observations['image']
        offsets = th.Tensor([0, self.max_value, 2 * self.max_value]).to(x.device)
        x = (x + offsets[None, :, None, None]).long()
        x = self.embedding(x).sum(1).permute(0, 3, 1, 2)
        x = self.cnn(x)
        x = self.linear(x)
        y = observations['label']      
        y = th.squeeze(y)
        y = self.label_embedding(y)
        if y.ndim == 1:
            y = y.reshape((1, -1))
        z = th.cat((x,y), dim=1)
        z = self.linear2(z)
        return z
\end{verbatim}

\subsubsection{Flattening Observation}
Lastly, we also tested a features extractor
that flattens the input observations 
in 2-D arrays into a single 1-D array 
with one-hot encoding.
This flattening observation feature extractors
are used in \texttt{N-rooms} environments and
the original code implementation in hierarchical reward machines \cite{icarte2022reward}.

\subsection{PPO Hyperparameters}

\paragraph{\texttt{MiniGrid}-based Environments}

\begin{itemize}
\item learning rate: $2.5e^{-4}$
\item clip: 0.2
\item option networks for policy and value: $[64, 64]$
\item flat RL networks for policy and value: $[128, 128]$
\item number of steps per rollout: 2048
\item batch size: 256
\item number of epochs for PPO update: 10
\item gamma: 0.99
\item lambda for GAE: 0.95
\item entropy coefficient: 0.01
\item value function coefficient: 0.5
\item maximum gradient norm: 0.05
\item maximum episode length: 1024
\item option termination reward: +1
\item option unit penalty cost: $\frac{0.9}{1024}$
\end{itemize}

\paragraph{\texttt{N-rooms}-based Environments}
\begin{itemize}
\item learning rate: $1e^{-4}$
\item clip: 0.1
\item networks for policy and value: $[256, 256]$
\item number of steps per rollout: 2048
\item batch size: 128
\item number of epochs for PPO update: 10
\item gamma: 0.99
\item lambda for GAE: 0.95
\item entropy coefficient: 0.01
\item value function coefficient: 0.5
\item maximum gradient norm: 0.05
\item maximum episode length: 1024
\item option termination reward: +1
\item option unit penalty cost: -0.05
\end{itemize}

\subsection{DDQN Hyperparameters}

\paragraph{\texttt{MiniGrid}-based Environments}
\begin{itemize}
    \item replay buffer size: 149504
    \item learning rate: 0.0005
    \item learning starts: 1000
    \item batch size: 96
    \item tau: 1.0
    \item gamma: 0.9
    \item train frequency: 1 step
    \item gradient steps: 1
    \item target update interval: 500
    \item exploration fraction: 0.1
    \item exploration initial epsilon: 1.0
    \item exploration final epsilon: 0.1,
    \item max gradient norm: 10
    \item option networks for value: $[64, 64]$
    \item flat RL networks for value: $[128, 128]$
\end{itemize}
The DDQN hyperparameters are chosen from
the baseline agent implementation \cite{icarte2022reward},

\newpage
\section{More Results}

\subsection{\texttt{MiniGrid DoorKey}}
Figure \ref{fig:options_doorkey} shows 
learning progress of options and 
we show a shortest plan from annotated planning task.
\begin{figure}[h!]
\centering
\begin{subfigure}[b]{0.34\textwidth}
  \includegraphics[width=\textwidth]{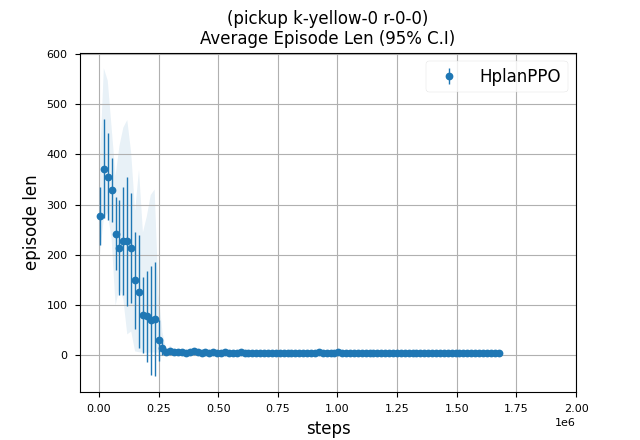}
  \caption{(pickup k-yellow-0 r-0-0) Episode Length}
\end{subfigure}
\begin{subfigure}[b]{0.34\textwidth}
  \includegraphics[width=\textwidth]{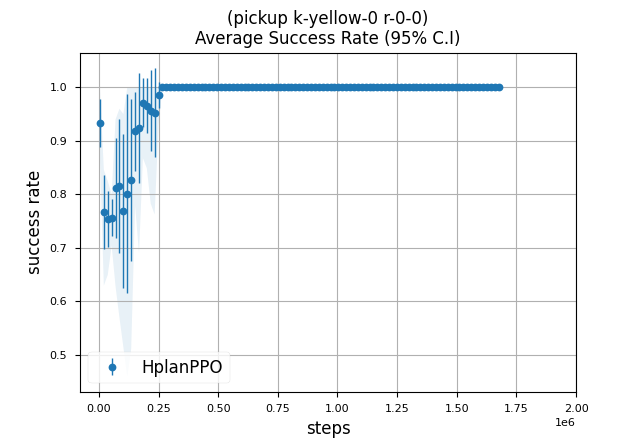}
  \caption{(pickup k-yellow-0 r-0-0) Success Rate}
\end{subfigure}
\begin{subfigure}[b]{0.34\textwidth}
  \includegraphics[width=\textwidth]{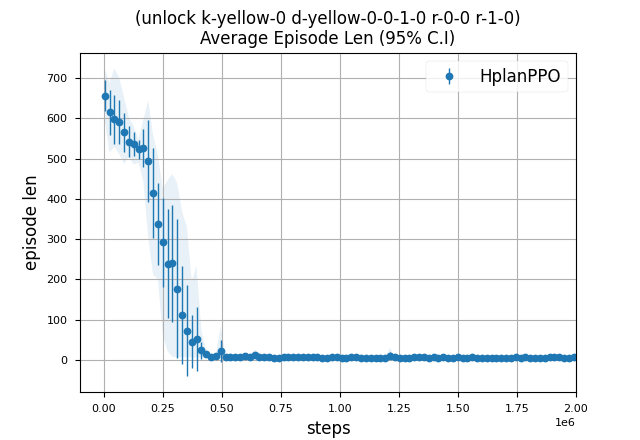}
  \caption{unlock k-yellow-0 d-yellow-0-0-1-0 r-0-0 r-1-0) Episode Length}
\end{subfigure}
\begin{subfigure}[b]{0.34\textwidth}
  \includegraphics[width=\textwidth]{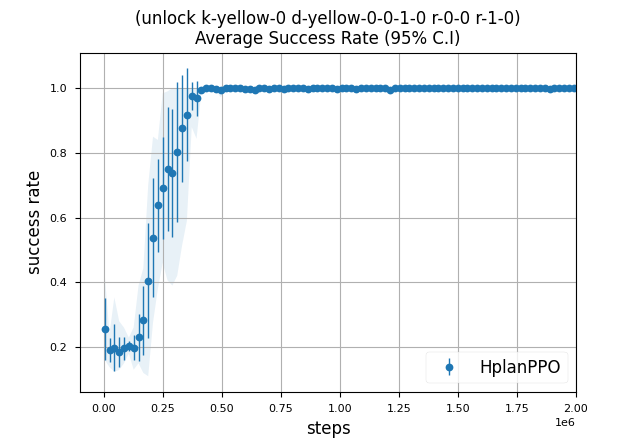}
  \caption{unlock k-yellow-0 d-yellow-0-0-1-0 r-0-0 r-1-0) Success Rate}
\end{subfigure}
\begin{subfigure}[b]{0.34\textwidth}
  \includegraphics[width=\textwidth]{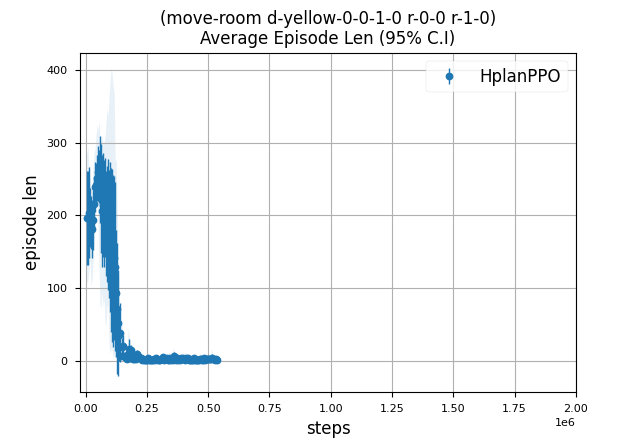}
  \caption{(move-room d-yellow-0-0-1-0 r-0-0 r-1-0) Episode Length}
\end{subfigure}
\begin{subfigure}[b]{0.34\textwidth}
  \includegraphics[width=\textwidth]{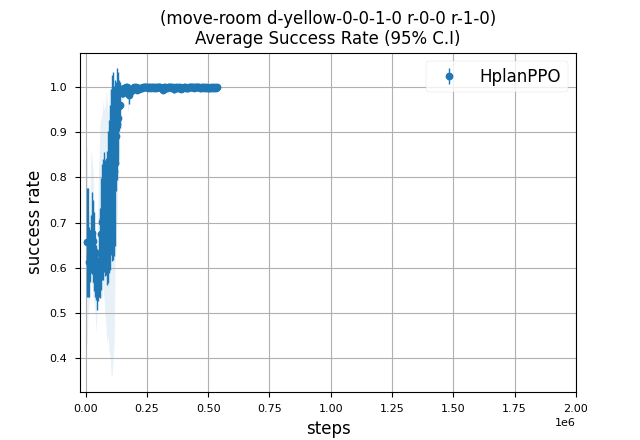}
  \caption{(move-room d-yellow-0-0-1-0 r-0-0 r-1-0) Success Rate}
\end{subfigure}
\begin{subfigure}[b]{0.34\textwidth}
  \includegraphics[width=\textwidth]{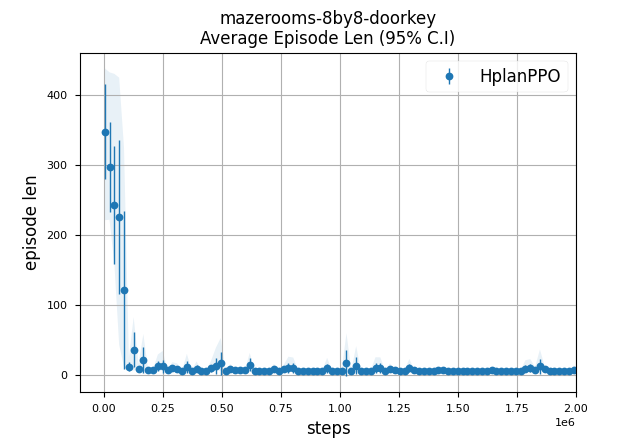}
  \caption{Goal option Episode Length}
\end{subfigure}
\begin{subfigure}[b]{0.34\textwidth}
  \includegraphics[width=\textwidth]{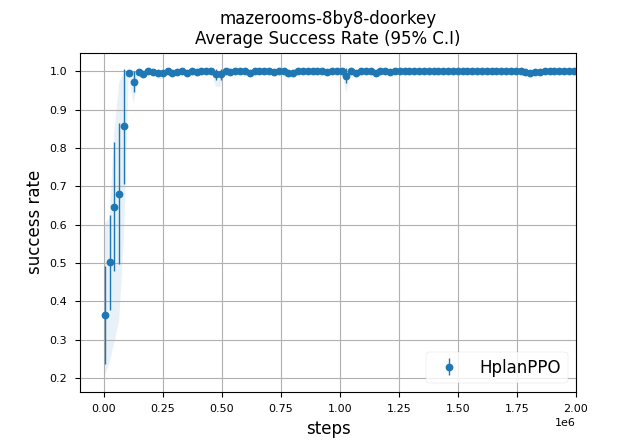}
  \caption{Goal option Success Rate}
\end{subfigure}
\caption{
Option Learning Progress in \texttt{MiniGrid Door Key} Domain
}
\label{fig:options_doorkey}
\end{figure}

\paragraph{Plan from Annotated Planning Task}
\begin{verbatim}
state:0
(locked d-yellow-0-0-1-0)
(at-agent r-0-0)
(empty-hand)
(at k-yellow-0 r-0-0)

action:0
(pickup k-yellow-0 r-0-0)
  PRE: (at k-yellow-0 r-0-0)
  PRE: (empty-hand)
  PRE: (at-agent r-0-0)
  ADD: (carry k-yellow-0)
  DEL: (at k-yellow-0 r-0-0)
  DEL: (empty-hand)

state:1
(carry k-yellow-0)
(locked d-yellow-0-0-1-0)
(at-agent r-0-0)

action:1
(unlock k-yellow-0 d-yellow-0-0-1-0 r-0-0 r-1-0)
  PRE: (carry k-yellow-0)
  PRE: (locked d-yellow-0-0-1-0)
  PRE: (at-agent r-0-0)
  ADD: (unlocked d-yellow-0-0-1-0)
  DEL: (locked d-yellow-0-0-1-0)

state:2
(carry k-yellow-0)
(unlocked d-yellow-0-0-1-0)
(at-agent r-0-0)

action:2
(move-room d-yellow-0-0-1-0 r-0-0 r-1-0)
  PRE: (unlocked d-yellow-0-0-1-0)
  PRE: (at-agent r-0-0)
  ADD: (at-agent r-1-0)
  DEL: (at-agent r-0-0)
\end{verbatim}


\newpage
\subsection{\texttt{MiniGrid 4 Rooms with a Locked Door}}
Figures \ref{fig:options_4rooms_a} -- \ref{fig:options_4rooms_b} show
learning progress of options and 
we show a shortest plan from annotated planning task.
\begin{figure}[h!]
\centering
\begin{subfigure}[b]{0.34\textwidth}
  \includegraphics[width=\textwidth]{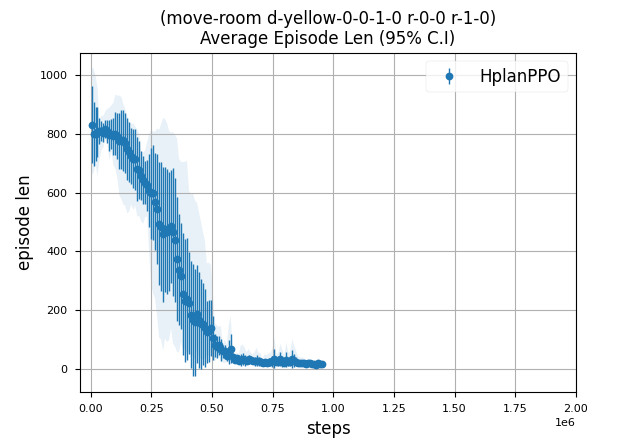}
  \caption{(move-room d-yellow-0-0-1-0 r-0-0 r-1-0) Episode Length}
\end{subfigure}
\begin{subfigure}[b]{0.34\textwidth}
  \includegraphics[width=\textwidth]{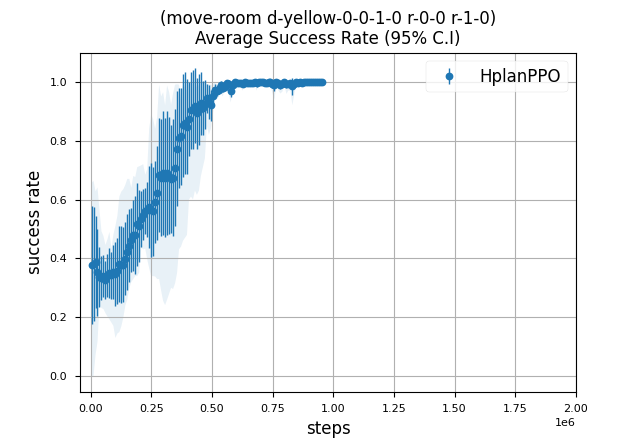}
  \caption{(move-room d-yellow-0-0-1-0 r-0-0 r-1-0) Success Rate}
\end{subfigure}
\begin{subfigure}[b]{0.34\textwidth}
  \includegraphics[width=\textwidth]{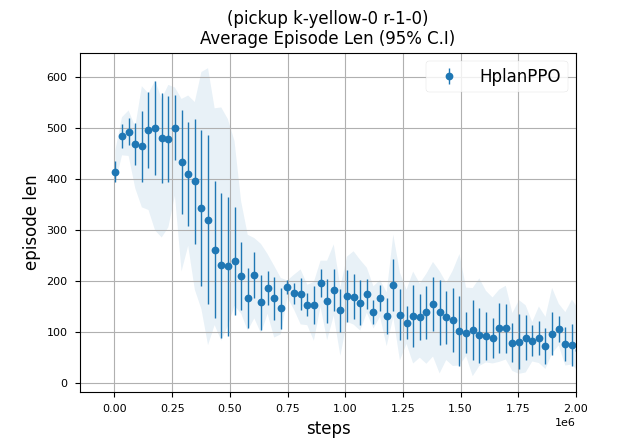}
  \caption{(pickup k-yellow-0 r-1-0) Episode Length}
\end{subfigure}
\begin{subfigure}[b]{0.34\textwidth}
  \includegraphics[width=\textwidth]{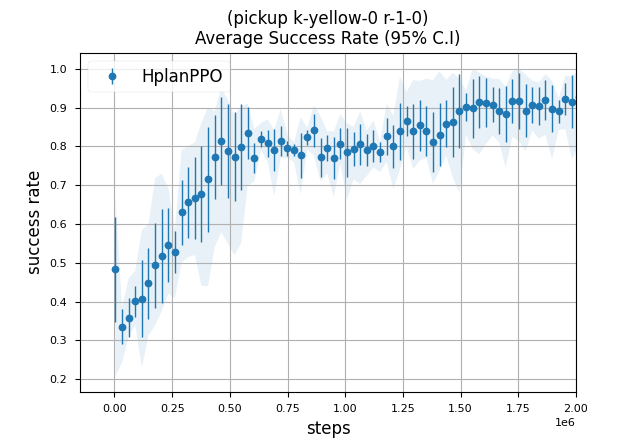}
  \caption{(pickup k-yellow-0 r-1-0) Success Rate}
\end{subfigure}
\begin{subfigure}[b]{0.34\textwidth}
  \includegraphics[width=\textwidth]{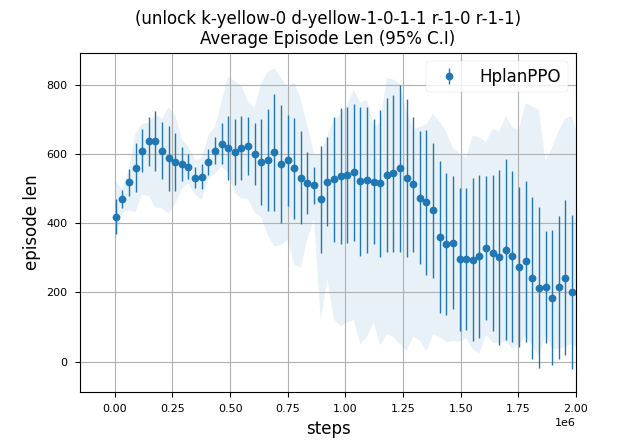}
  \caption{(unlock k-yellow-0 d-yellow-1-0-1-1 r-1-0 r-1-1) Episode Length}
\end{subfigure}
\begin{subfigure}[b]{0.34\textwidth}
  \includegraphics[width=\textwidth]{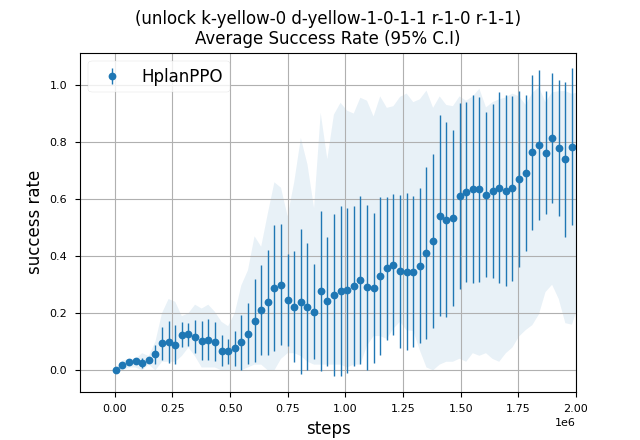}
  \caption{(unlock k-yellow-0 d-yellow-1-0-1-1 r-1-0 r-1-1) Success Rate}
\end{subfigure}
\begin{subfigure}[b]{0.34\textwidth}
  \includegraphics[width=\textwidth]{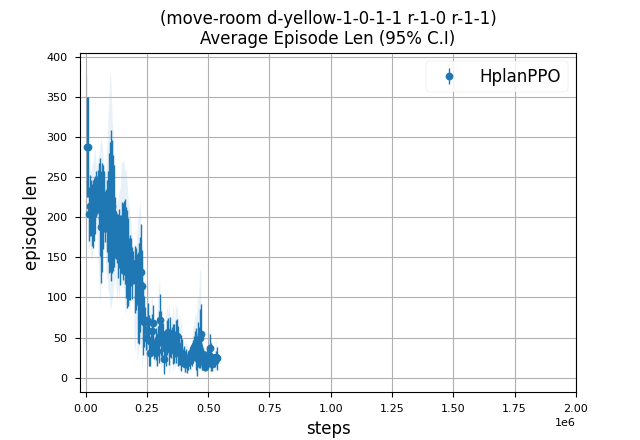}
  \caption{(move-room d-yellow-1-0-1-1 r-1-0 r-1-1) Episode Length}
\end{subfigure}
\begin{subfigure}[b]{0.34\textwidth}
  \includegraphics[width=\textwidth]{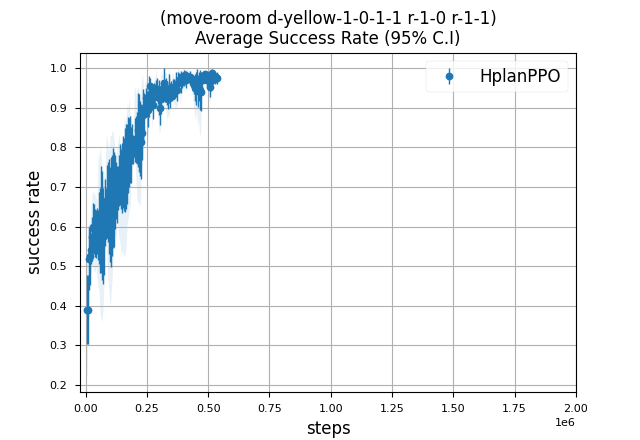}
  \caption{(move-room d-yellow-1-0-1-1 r-1-0 r-1-1) Success Rate}
\end{subfigure}
\caption{
Option Learning Progress in \texttt{MiniGrid 4 Rooms with a Locked Door} Domain
}
\label{fig:options_4rooms_a}
\end{figure}

\begin{figure}[h!]
\centering
\begin{subfigure}[b]{0.34\textwidth}
  \includegraphics[width=\textwidth]{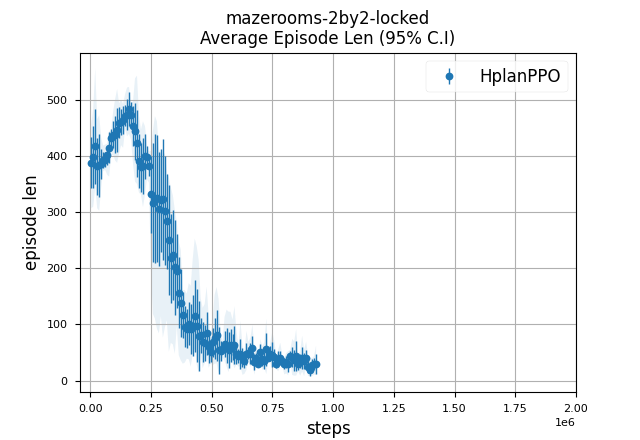}
  \caption{Goal Option Episode Length}
\end{subfigure}
\begin{subfigure}[b]{0.34\textwidth}
  \includegraphics[width=\textwidth]{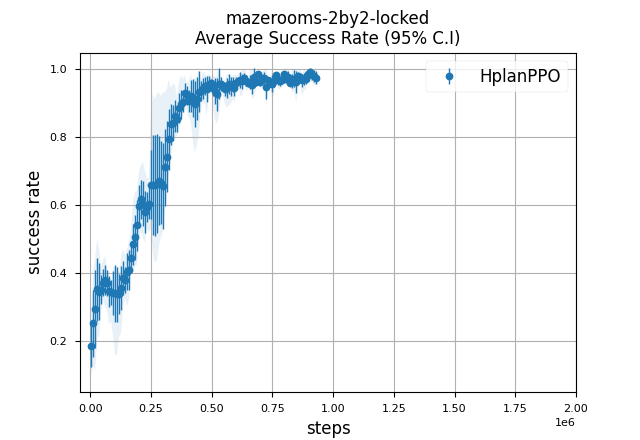}
  \caption{Goal Option Success Rate}
\end{subfigure}
\caption{
Option Learning Progress in \texttt{MiniGrid 4 Rooms with a Locked Door} Domain
}
\label{fig:options_4rooms_b}
\end{figure}

\paragraph{Plan from Annotated Planning Task}
\begin{verbatim}
state:0
(unlocked d-yellow-0-0-0-1)
(unlocked d-yellow-0-0-1-0)
(locked d-yellow-1-0-1-1)
(empty-hand)
(at k-yellow-0 r-1-0)
(at-agent r-0-0)

action:0
(move-room d-yellow-0-0-1-0 r-0-0 r-1-0)
  PRE: (unlocked d-yellow-0-0-1-0)
  PRE: (at-agent r-0-0)
  ADD: (at-agent r-1-0)
  DEL: (at-agent r-0-0)

state:1
(empty-hand)
(unlocked d-yellow-0-0-0-1)
(locked d-yellow-1-0-1-1)
(at-agent r-1-0)
(at k-yellow-0 r-1-0)
(unlocked d-yellow-0-0-1-0)

action:1
(pickup k-yellow-0 r-1-0)
  PRE: (at-agent r-1-0)
  PRE: (at k-yellow-0 r-1-0)
  PRE: (empty-hand)
  ADD: (carry k-yellow-0)
  DEL: (at k-yellow-0 r-1-0)
  DEL: (empty-hand)

state:2
(carry k-yellow-0)
(unlocked d-yellow-0-0-0-1)
(locked d-yellow-1-0-1-1)
(at-agent r-1-0)
(unlocked d-yellow-0-0-1-0)

action:2
(unlock k-yellow-0 d-yellow-1-0-1-1 r-1-0 r-1-1)
  PRE: (at-agent r-1-0)
  PRE: (carry k-yellow-0)
  PRE: (locked d-yellow-1-0-1-1)
  ADD: (unlocked d-yellow-1-0-1-1)
  DEL: (locked d-yellow-1-0-1-1)

state:3
(carry k-yellow-0)
(unlocked d-yellow-1-0-1-1)
(unlocked d-yellow-0-0-0-1)
(at-agent r-1-0)
(unlocked d-yellow-0-0-1-0)

action:3
(move-room d-yellow-1-0-1-1 r-1-0 r-1-1)
  PRE: (at-agent r-1-0)
  PRE: (unlocked d-yellow-1-0-1-1)
  ADD: (at-agent r-1-1)
  DEL: (at-agent r-1-0)
\end{verbatim}

\newpage
\subsection{\texttt{MiniGrid 9 Rooms with Locked Doors}}
Figure \ref{fig:options_9rooms_a} -- \ref{fig:options_9rooms_b} show
learning progress of options and 
we show a shortest plan from annotated planning task.
\begin{figure}[h!]
\centering
\begin{subfigure}[b]{0.34\textwidth}
  \includegraphics[width=\textwidth]{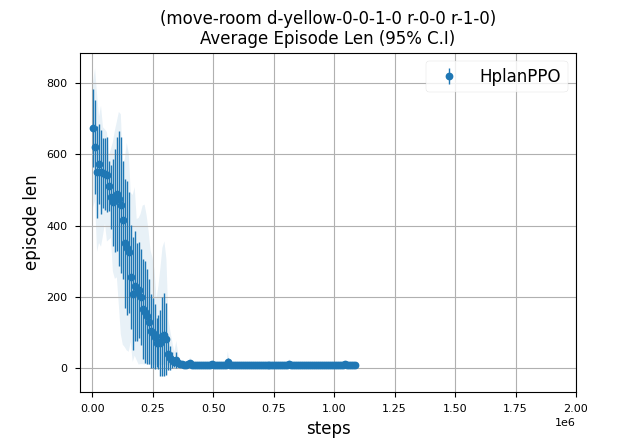}
  \caption{(move-room d-yellow-0-0-1-0 r-0-0 r-1-0) Episode Length}
\end{subfigure}
\begin{subfigure}[b]{0.34\textwidth}
  \includegraphics[width=\textwidth]{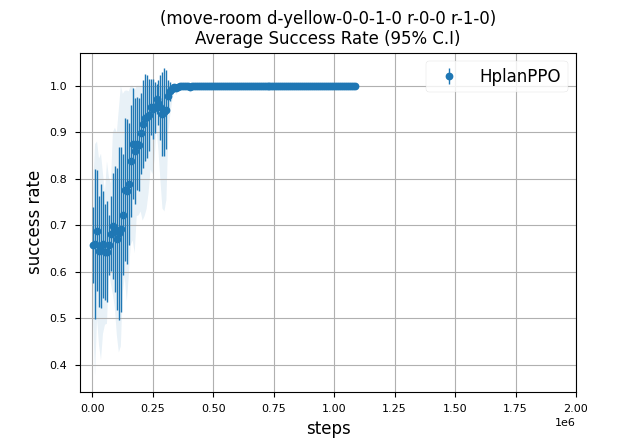}
  \caption{(move-room d-yellow-0-0-1-0 r-0-0 r-1-0) Success Rate}
\end{subfigure}
\begin{subfigure}[b]{0.34\textwidth}
  \includegraphics[width=\textwidth]{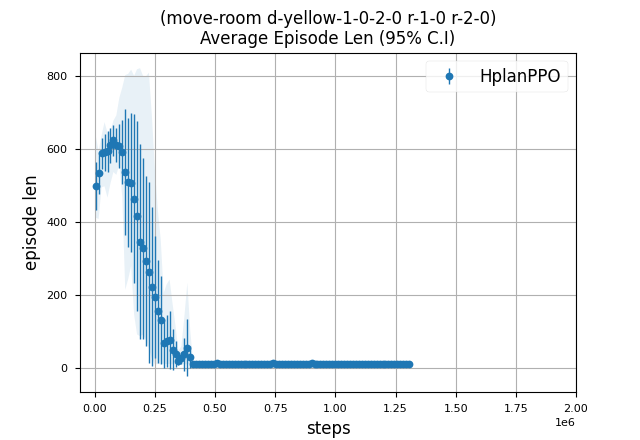}
  \caption{(move-room d-yellow-1-0-2-0 r-1-0 r-2-0) Episode Length}
\end{subfigure}
\begin{subfigure}[b]{0.34\textwidth}
  \includegraphics[width=\textwidth]{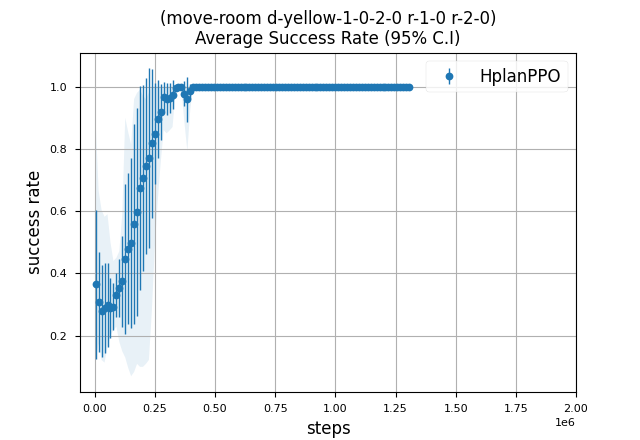}
  \caption{(move-room d-yellow-1-0-2-0 r-1-0 r-2-0) Success Rate}
\end{subfigure}
\begin{subfigure}[b]{0.34\textwidth}
  \includegraphics[width=\textwidth]{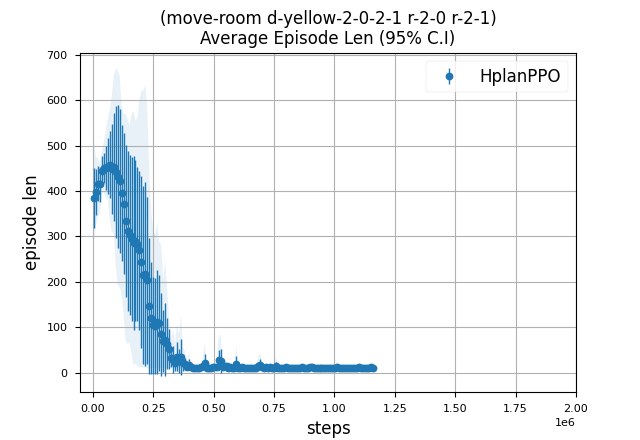}
  \caption{(move-room d-yellow-2-0-2-1 r-2-0 r-2-1) Episode Length}
\end{subfigure}
\begin{subfigure}[b]{0.34\textwidth}
  \includegraphics[width=\textwidth]{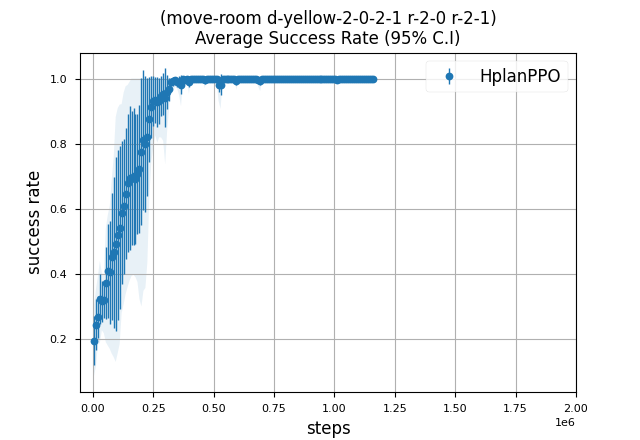}
  \caption{(move-room d-yellow-2-0-2-1 r-2-0 r-2-1) Success Rate}
\end{subfigure}
\begin{subfigure}[b]{0.34\textwidth}
  \includegraphics[width=\textwidth]{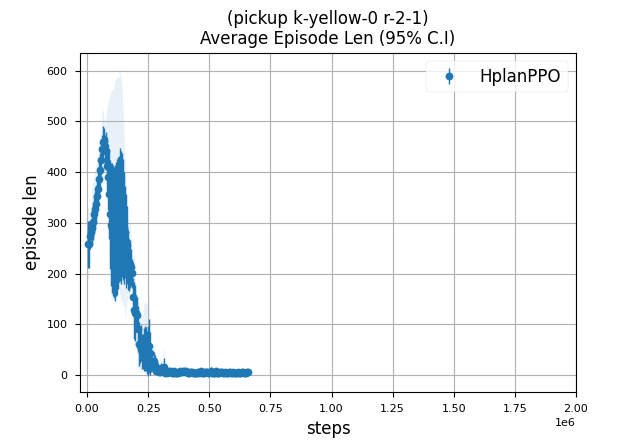}
  \caption{(pickup k-yellow-0 r-2-1) Episode Length}
\end{subfigure}
\begin{subfigure}[b]{0.34\textwidth}
  \includegraphics[width=\textwidth]{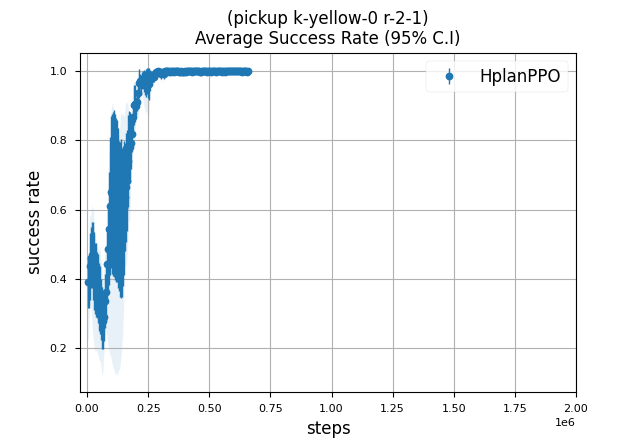}
  \caption{(pickup k-yellow-0 r-2-1) Success Rate}
\end{subfigure}
\caption{
Option Learning Progress in \texttt{MiniGrid 9 Rooms with Locked Doors} Domain
}
\label{fig:options_9rooms_a}
\end{figure}


\begin{figure}[h!]
\centering
\begin{subfigure}[b]{0.34\textwidth}
  \includegraphics[width=\textwidth]{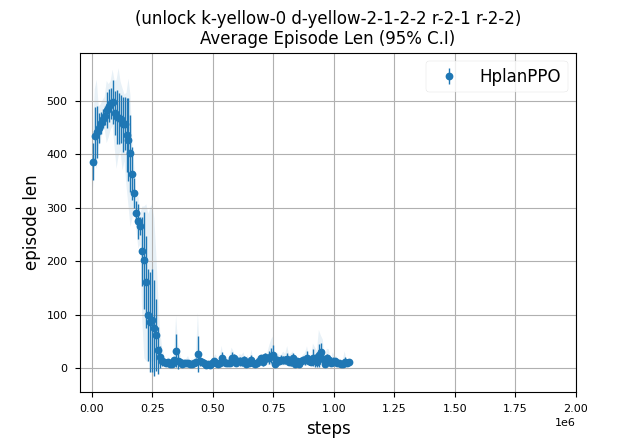}
  \caption{(unlock k-yellow-0 d-yellow-2-1-2-2 r-2-1 r-2-2) Episode Length}
\end{subfigure}
\begin{subfigure}[b]{0.34\textwidth}
  \includegraphics[width=\textwidth]{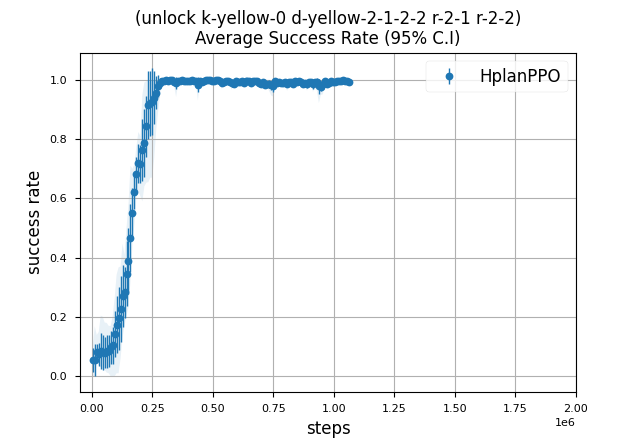}
  \caption{(unlock k-yellow-0 d-yellow-2-1-2-2 r-2-1 r-2-2) Success Rate}
\end{subfigure}
\begin{subfigure}[b]{0.34\textwidth}
  \includegraphics[width=\textwidth]{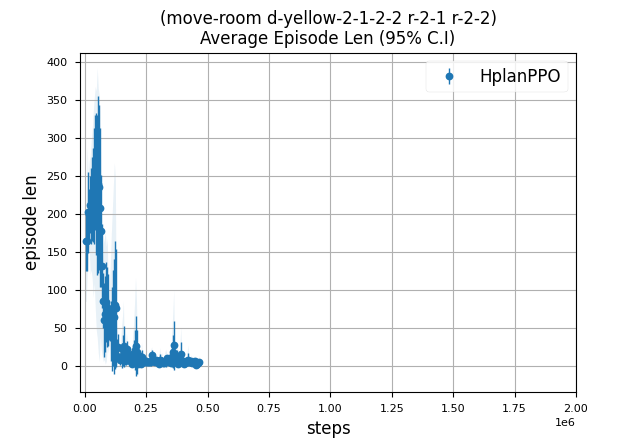}
  \caption{(move-room d-yellow-2-1-2-2 r-2-1 r-2-2) Episode Length}
\end{subfigure}
\begin{subfigure}[b]{0.34\textwidth}
  \includegraphics[width=\textwidth]{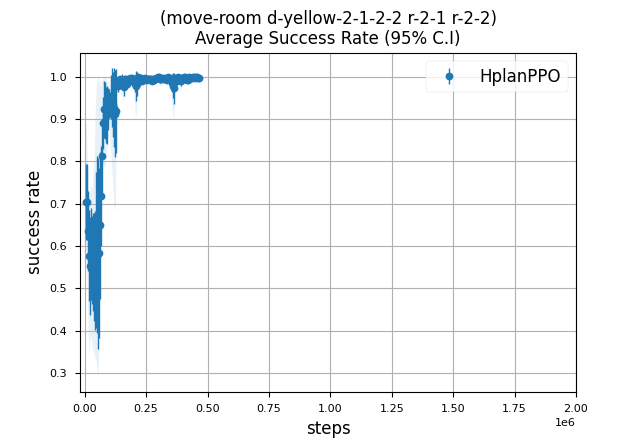}
  \caption{(move-room d-yellow-2-1-2-2 r-2-1 r-2-2) Success Rate}
\end{subfigure}
\begin{subfigure}[b]{0.34\textwidth}
  \includegraphics[width=\textwidth]{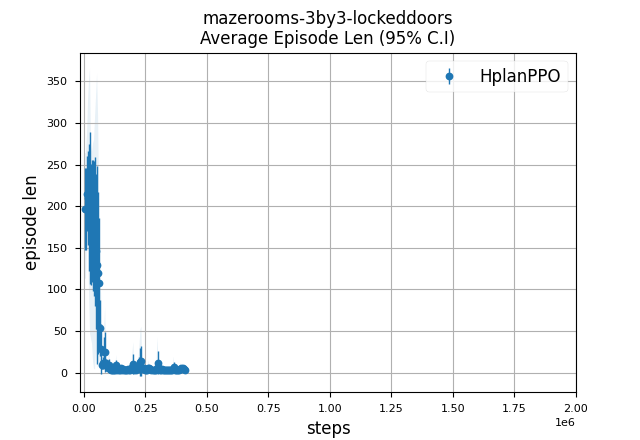}
  \caption{Goal Option Episode Length}
\end{subfigure}
\begin{subfigure}[b]{0.34\textwidth}
  \includegraphics[width=\textwidth]{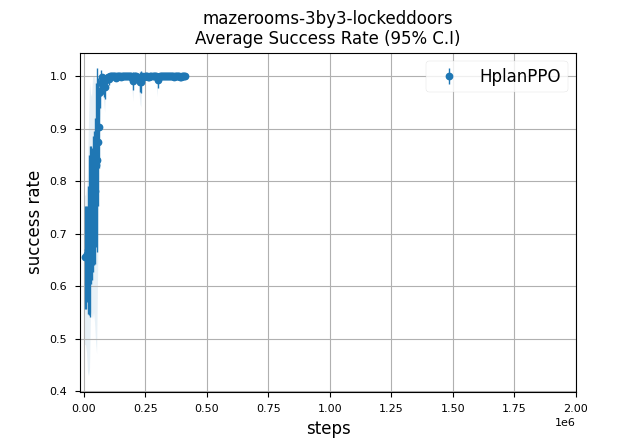}
  \caption{Goal Option Success Rate}
\end{subfigure}
\caption{
Option Learning Progress in \texttt{MiniGrid 9 Rooms with Locked Doors} Domain
}
\label{fig:options_9rooms_b}
\end{figure}

\paragraph{Plan from Annotated Planning Task}
\begin{verbatim}
state:0
(unlocked d-yellow-1-1-1-2)
(locked d-yellow-1-2-2-2)
(unlocked d-yellow-2-0-2-1)
(unlocked d-yellow-1-0-2-0)
(unlocked d-yellow-0-0-0-1)
(at k-yellow-0 r-2-1)
(locked d-yellow-0-1-1-1)
(unlocked d-yellow-0-0-1-0)
(locked d-yellow-1-0-1-1)
(unlocked d-yellow-0-1-0-2)
(unlocked d-yellow-0-2-1-2)
(locked d-yellow-2-1-2-2)
(empty-hand)
(unlocked d-yellow-1-1-2-1)
(at-agent r-0-0)

action:0
(move-room d-yellow-0-0-1-0 r-0-0 r-1-0)
  PRE: (unlocked d-yellow-0-0-1-0)
  PRE: (at-agent r-0-0)
  ADD: (at-agent r-1-0)
  DEL: (at-agent r-0-0)

state:1
(unlocked d-yellow-1-1-1-2)
(locked d-yellow-1-2-2-2)
(unlocked d-yellow-2-0-2-1)
(unlocked d-yellow-1-0-2-0)
(unlocked d-yellow-0-0-0-1)
(at-agent r-1-0)
(at k-yellow-0 r-2-1)
(locked d-yellow-0-1-1-1)
(unlocked d-yellow-0-0-1-0)
(unlocked d-yellow-0-1-0-2)
(unlocked d-yellow-0-2-1-2)
(locked d-yellow-2-1-2-2)
(empty-hand)
(locked d-yellow-1-0-1-1)
(unlocked d-yellow-1-1-2-1)

action:1
(move-room d-yellow-1-0-2-0 r-1-0 r-2-0)
  PRE: (at-agent r-1-0)
  PRE: (unlocked d-yellow-1-0-2-0)
  ADD: (at-agent r-2-0)
  DEL: (at-agent r-1-0)

state:2
(unlocked d-yellow-1-1-1-2)
(locked d-yellow-1-2-2-2)
(unlocked d-yellow-2-0-2-1)
(unlocked d-yellow-1-0-2-0)
(unlocked d-yellow-0-0-0-1)
(at k-yellow-0 r-2-1)
(locked d-yellow-0-1-1-1)
(unlocked d-yellow-0-0-1-0)
(locked d-yellow-1-0-1-1)
(unlocked d-yellow-0-1-0-2)
(unlocked d-yellow-0-2-1-2)
(locked d-yellow-2-1-2-2)
(at-agent r-2-0)
(empty-hand)
(unlocked d-yellow-1-1-2-1)

action:2
(move-room d-yellow-2-0-2-1 r-2-0 r-2-1)
  PRE: (unlocked d-yellow-2-0-2-1)
  PRE: (at-agent r-2-0)
  ADD: (at-agent r-2-1)
  DEL: (at-agent r-2-0)

state:3
(unlocked d-yellow-1-1-1-2)
(locked d-yellow-1-2-2-2)
(unlocked d-yellow-2-0-2-1)
(unlocked d-yellow-1-0-2-0)
(unlocked d-yellow-0-0-0-1)
(at-agent r-2-1)
(at k-yellow-0 r-2-1)
(locked d-yellow-0-1-1-1)
(unlocked d-yellow-0-0-1-0)
(unlocked d-yellow-0-1-0-2)
(unlocked d-yellow-0-2-1-2)
(locked d-yellow-2-1-2-2)
(empty-hand)
(locked d-yellow-1-0-1-1)
(unlocked d-yellow-1-1-2-1)

action:3
(pickup k-yellow-0 r-2-1)
  PRE: (at k-yellow-0 r-2-1)
  PRE: (empty-hand)
  PRE: (at-agent r-2-1)
  ADD: (carry k-yellow-0)
  DEL: (at k-yellow-0 r-2-1)
  DEL: (empty-hand)

state:4
(carry k-yellow-0)
(unlocked d-yellow-1-1-1-2)
(locked d-yellow-1-2-2-2)
(unlocked d-yellow-2-0-2-1)
(unlocked d-yellow-1-0-2-0)
(unlocked d-yellow-0-0-0-1)
(at-agent r-2-1)
(locked d-yellow-0-1-1-1)
(unlocked d-yellow-0-0-1-0)
(unlocked d-yellow-0-1-0-2)
(unlocked d-yellow-0-2-1-2)
(locked d-yellow-2-1-2-2)
(locked d-yellow-1-0-1-1)
(unlocked d-yellow-1-1-2-1)

action:4
(unlock k-yellow-0 d-yellow-2-1-2-2 r-2-1 r-2-2)
  PRE: (carry k-yellow-0)
  PRE: (locked d-yellow-2-1-2-2)
  PRE: (at-agent r-2-1)
  ADD: (unlocked d-yellow-2-1-2-2)
  DEL: (locked d-yellow-2-1-2-2)

state:5
(carry k-yellow-0)
(unlocked d-yellow-1-1-1-2)
(locked d-yellow-1-2-2-2)
(unlocked d-yellow-2-0-2-1)
(unlocked d-yellow-1-0-2-0)
(unlocked d-yellow-0-0-0-1)
(at-agent r-2-1)
(locked d-yellow-0-1-1-1)
(unlocked d-yellow-0-0-1-0)
(unlocked d-yellow-2-1-2-2)
(unlocked d-yellow-0-1-0-2)
(unlocked d-yellow-0-2-1-2)
(locked d-yellow-1-0-1-1)
(unlocked d-yellow-1-1-2-1)

action:5
(move-room d-yellow-2-1-2-2 r-2-1 r-2-2)
  PRE: (unlocked d-yellow-2-1-2-2)
  PRE: (at-agent r-2-1)
  ADD: (at-agent r-2-2)
  DEL: (at-agent r-2-1)
\end{verbatim}

\subsection{Other Algorithms}
For other algorithms \texttt{HplanDDQN} and \texttt{HRM},
we tried hyperparameter tuning for \texttt{DDQN}
and tested with CNN and Flattened observation feature extractors.
\texttt{HRM} was able to solve \texttt{MiniGrid DoorKey} environment,
but it couldn't solve larger domains.
\texttt{HplanDDQN} was not able to solve any \texttt{MiniGrid}-based domains.
Both \texttt{HRM} and \texttt{HplanDDQN}
have the same neural network architecture,
but the main difference between \texttt{HRM} and \texttt{HplanDDQN}
is that \texttt{HplanDDQN} did not reuse samples across options.

\end{document}